\documentclass[twoside,11pt]{article}

\usepackage[abbrvbib, preprint]{jmlr2e}
\usepackage{amsmath,booktabs,array,algorithm,algorithmic,color,enumitem}
\usepackage{flafter}
\hypersetup{hypertexnames=false}

\newcommand{\method}{\mbox{Muon-C}}
\newcommand{\adamw}{AdamW}
\newcommand{\globalmuon}{global unfolded Muon}

\newcommand{\R}{\mathbb{R}}
\newcommand{\C}{\mathbb{C}}
\newcommand{\cF}{\mathcal{F}}

\newcommand{\inner}[2]{\left\langle #1,#2\right\rangle}
\newcommand{\opnorm}[1]{\left\lVert #1\right\rVert_{\mathrm{op}}}
\newcommand{\convnorm}[1]{\left\lVert #1\right\rVert_{\mathrm{conv}}}
\newcommand{\samplenorm}[1]{\left\lVert #1\right\rVert_{\mathrm{samp}}}
\newcommand{\patchnorm}[1]{\left\lVert #1\right\rVert_{\mathrm{patch}}}
\newcommand{\spnorm}[1]{\left\lVert #1\right\rVert_{\mathrm{sp}}}

\ShortHeadings{Operator-Aligned Muon for Convolutional Kernels}{Qing and Li}
\firstpageno{1}

\begin{document}
\title{Muon-C: Operator-Aligned Muon for Convolutional Kernels}
\author{\name Jiaxin Qing \email jxqing@berkeley.edu \\
\addr University of California, Berkeley\\
Berkeley, CA 94720-1776, USA
\AND
\name Lexin Li \email lexinli@berkeley.edu \\
\addr University of California, Berkeley\\
Berkeley, CA 94720-1776, USA}
\maketitle

\begin{abstract}
Muon replaces matrix momentum with an approximately orthogonal polar direction, but its geometry depends on the matrix representation. For convolution, standard unfolding describes a local patch map rather than the convolution operator. We introduce Muon-C, an operator-aligned optimizer that represents kernel momentum as frequency-wise channel-transfer matrices, polarizes these blocks independently, and uses a critical Fourier grid to return updates exactly to the original finite kernel support. We show that the new geometry arises from combining the block partition and Fourier coordinates. The exact-polar direction is a linear minimization oracle under the critically sampled convolution norm. Its worst-case guarantee relative to the continuous convolution-operator norm is never weaker than unfolding and is strictly stronger for $3\times3$ kernels. On CIFAR-10 flow matching with matched applied-update RMS, Muon-C reaches 9.87 FID at 40k iterations, compared with 22.26 for unfolded Muon and 51.31 for Adam. It reaches their final quality using $0.62\times$ and $0.64\times$ their model FLOPs, respectively. Under equal tuning budgets, Muon-C achieves 3.42 FID. Gains persist across data scales and transfer to classification across convolutional architectures.
\end{abstract}

\begin{keywords}
Muon, Optimization, Convolutional kernels, Flow matching, Fourier analysis.
\end{keywords}

%%%%%%%%%%%%%%%%%%%%%%%%%%%%%%%%%%%%%%%%%%%%%%%%%%%%%%%%%%%%%%%%%%%%%%%%%%%%%%%%%%%%%%%
\section{Introduction}

Since its introduction in late 2024 \citep{jordan2024muon}, Muon has rapidly emerged as an important alternative to coordinate-wise optimizers such as Adam \citep{kingma2015adam}. Rather than adapting individual coordinates, Muon reads the momentum of a weight as a matrix and replaces it with an approximately orthogonal polar direction. Its practical relevance is already visible at scale. After reducing training time in language-model speedruns, Muon was used to train a 16B mixture-of-experts model that matched an AdamW run at roughly half the compute \citep{liu2025muonscalable}. More fundamentally, recent analyses show that the exact polar update minimizes the linearized loss over a spectral-norm ball \citep{bernstein2024oldnorm,pethick2025normlmo,chen2025muonconstraints}. Follow-up work further studies the role of this spectral geometry \citep{huang2026spectra,shumaylov2026muonnot}. This interpretation makes a central fact explicit: Muon's geometry depends on which quantities are treated as the rows and columns of the matrix it orthogonalizes.

This dependence, however, creates a fundamental ambiguity for convolutional kernels. A kernel $W\in\R^{C_{\mathrm{out}}\times C_{\mathrm{in}}\times k_h\times k_w}$ has no canonical matrix representation, and the standard practice is to flatten it into a $C_{\mathrm{out}}\times(C_{\mathrm{in}}k_hk_w)$ matrix before applying Muon. The unfolding preserves every coefficient, but it is not neutral bookkeeping. It makes Muon act on a local patch-to-output map and thereby selects a particular optimizer geometry. Resolving this ambiguity matters because convolution remains a central component of modern learning systems, including the U-Nets used in diffusion and flow matching \citep{ronneberger2015unet,lipman2023flow}, ResNets \citep{he2016deep}, and ConvNeXt-style models \citep{liu2022convnet,woo2023convnextv2}. More broadly, how Muon should act on higher-order parameters has only recently become an explicit optimizer-design question \citep{bogachev2026tensorion}.

We take an operator-aligned view of this problem. A convolutional kernel parameterizes a translation-equivariant linear operator. Translation equivariance identifies Fourier modes as the natural invariant subspaces of this operator. In the Fourier basis, convolution is block diagonal, with one $C_{\mathrm{out}}\times C_{\mathrm{in}}$ channel-transfer matrix at each spatial frequency. These transfer matrices are the linear maps implemented by convolution on its translation modes. This observation leads to our central design principle: For a structured parameter, Muon should act on matrices that reflect the natural linear maps of the underlying operator rather than on an arbitrary tensor unfolding. 

Adopting this principle, in this article, we introduce Muon-C, an operator-aligned extension of Muon for convolutional kernels. Muon-C involves three key components. The first is Fourier/operator alignment. Muon-C transforms kernel momentum along its spatial axes into frequency-indexed channel maps, so the matrices seen by Muon correspond directly to the maps implemented by convolution on individual translation modes. The resulting representation is independent of feature-map resolution and follows from the operator's symmetry. The second is independent blockwise polarization. Muon-C applies the polar operation separately to each frequency-wise channel map, instead of applying one global polar operation to the transformed kernel. This distinction is essential because a Fourier change of basis alone does not define a new optimizer. The polar map is unitarily equivariant, so transforming the unfolded momentum, applying one global polar operation, and transforming back returns exactly the ordinary unfolded-Muon direction. The new geometry therefore comes from changing the independently constrained block partition, not from Fourier coordinates by themselves. The third is critical frequency sampling. Muon-C uses exactly a $k_h\times k_w$ Fourier grid, with one frequency sample per stored spatial offset, to make independent block updates compatible with finite kernel support. The dimension-matched DFT is unitary and bijective over the stored coefficients. It permits independent frequency-wise polarization and maps every resulting update exactly back to the original $k_h\times k_w$ support without cropping. The critical grid is therefore the smallest frequency representation that remains bijective over every stored coefficient matrix, and its size is independent of feature-map resolution. Figure~\ref{fig:two-obstacles} summarizes why the latter two components are essential.

\begin{figure}[t!]
\centering
\includegraphics[width=0.75\linewidth,height=2.65in]{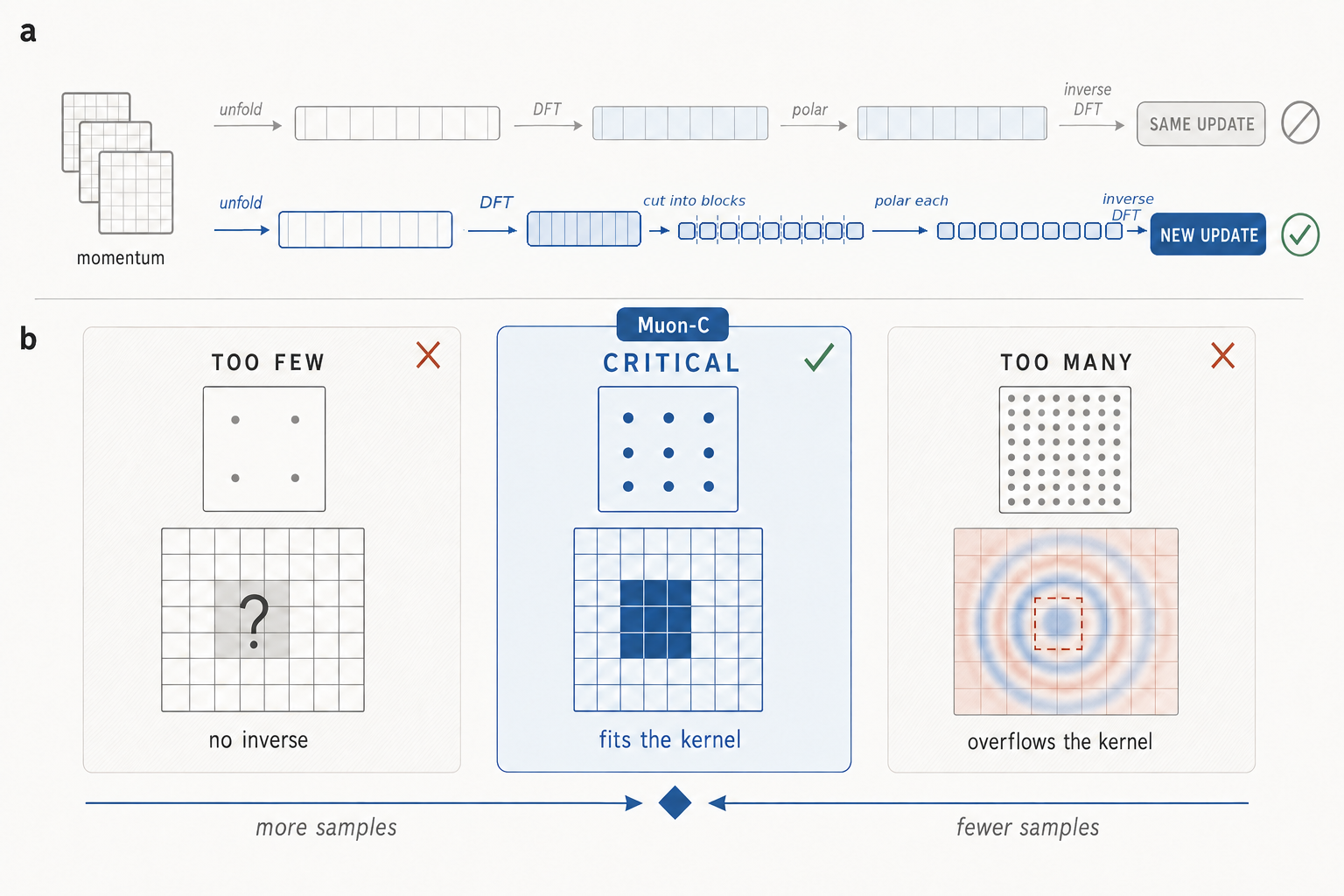}
\caption{Two ingredients are essential for frequency-domain Muon. Independent blockwise polarization changes Muon’s geometry beyond a mere Fourier change of coordinates, whereas critical frequency sampling ensures that the resulting update maps exactly back to the original finite kernel support.}
\label{fig:two-obstacles}
\vspace{-0.3in}
\end{figure}

Theoretically, we provide a rigorous characterization of Muon-C and its relationship to the convolution operator. First, we show that unitary row and column transformations cannot alter the global Muon direction, which isolates the block partition as the essential optimizer-design choice. Second, we show that the critical Fourier grid supplies a unitary, bijective, and support-preserving representation under which Muon-C is the exact linear minimization oracle for the critically sampled convolution norm. Finally, relative to the continuous convolution-operator norm, we establish a worst-case oracle guarantee for Muon-C, such that it is never weaker than standard unfolding for any finite kernel size, and is strictly stronger for the common $3 \times 3$ case, with a guaranteed fraction of the continuous-oracle optimum improved from $1/3$ for unfolding to $9/25$ for Muon-C. 

Empirically, we conduct controlled experiments across generative and discriminative learning tasks. On CIFAR-10 flow matching with matched applied-update RMS, Muon-C reaches 9.87 FID at 40k iterations, compared with 22.26 for unfolded Muon and 51.31 for Adam. It reaches the baselines' 400k-step final quality using only $0.62\times$ and $0.64\times$ their model FLOPs, corresponding to savings of 38\% and 36\%. Under equal optimizer-specific sweep budgets, Muon-C obtains 3.42 FID on CIFAR-10, compared with 3.54 for unfolded Muon and 3.83 for Adam. On ImageNet-1k-32, it improves both the 40k-step FID and the tuned final FID, reaching 21.14 and 12.02 compared with 28.02 and 13.24 for unfolded Muon. On ImageNet-100, Muon-C attains the highest validation accuracy across ResNet-34, ResNet-50, and ConvNeXtV2-T despite their widely different exposure to the convolutional optimizer route. A matched comparison with Muon-S, based on the spatial-block Conv2D duality map of \citet{bernstein2025modular}, shows a stronger early trajectory for translation-frequency blocks than for spatial-offset blocks.

Our contributions are fourfold.
\begin{itemize}[topsep=3pt]
\item We establish operator-aligned matrix geometry as a design principle for extending Muon to convolution. Translation equivariance identifies frequency-wise channel-transfer matrices as the natural linear maps, while independent blockwise polarization defines the new geometry.

\item We develop Muon-C, a finite-support realization of this operator-aligned geometry. It combines Fourier/operator alignment and independent frequency-wise polarization with critical $k_h\times k_w$ sampling, enabling blockwise Muon updates that map exactly back to the original kernel support.

\item We provide an operator-level theoretical characterization of Muon-C, showing that it is the exact linear minimization oracle under the critically sampled convolution norm and enjoys a stronger worst-case guarantee relative to standard unfolding.

\item We demonstrate that Muon-C consistently improves optimization efficiency across generative and classification settings, reaching comparable loss or sample quality substantially earlier than unfolded Muon and Adam/AdamW. A matched spatial-block ablation identifies the contribution of frequency organization.
\end{itemize}

The rest of the article is organized as follows. Section \ref{sec:related} reviews related work, and Section \ref{sec:prelim} introduces notation and the spectral-norm interpretation of Muon. Section \ref{sec:background} develops the operator-aligned view of convolution and motivates frequency-wise block geometry. Section \ref{sec:geometry-practice} presents Muon-C, including the critical-grid construction, theoretical guarantees, and practical implementation. Section \ref{sec:experiments} reports the empirical evaluations, along with robustness studies and mechanism ablations. Section \ref{sec:discussion} concludes with a discussion of future directions. The Appendix collects additional results and proofs.

%%%%%%%%%%%%%%%%%%%%%%%%%%%%%%%%%%%%%%%%%%%%%%%%%%%%%%%%%%%%%%%%%%%%%%%%%%%%%%%%%%%%%%%
\section{Related Work}
\label{sec:related}

\emph{Muon and norm-aware updates.} Muon applies an approximate polar map to matrix momentum, and recent analyses connect this operation to spectral-norm steepest descent and norm-constrained linear minimization \citep{jordan2024muon,bernstein2024oldnorm,pethick2025normlmo,chen2025muonconstraints}. These views make the matrix shape part of the optimizer because the chosen rows, columns, and block boundaries specify the spectral trust region. Adam/AdamW instead adapt coordinates individually and therefore do not require an explicit matrix geometry \citep{kingma2015adam,loshchilov2019decoupled}.

\emph{Modular duality for convolution.} \citet{bernstein2025modular} derive layerwise duality maps from norms chosen to reflect layer semantics and compose them across neural architectures. Their Conv2D map polarizes each spatial-offset channel matrix independently. Muon-S adopts this spatial-block geometry under our matched-update protocol. Muon-C instead polarizes translation-frequency channel maps, with critical sampling preserving finite kernel support.

\emph{Tensor and structured optimization.} K-FAC and Shampoo exploit Kronecker or tensor-mode structure to construct preconditioners \citep{martens2015optimizing,gupta2018shampoo}. Concurrent Tensorion constructs tensor-aware norm oracles through adaptively selected matrix unfoldings \citep{bogachev2026tensorion}. Muon-C takes a different, operator-aligned approach. Rather than selecting a matrix representation from the tensor structure alone, it derives the matrices seen by Muon from the linear operator parameterized by the kernel. For convolution, translation equivariance identifies the direct-sum Fourier representation, whose blocks are frequency-indexed channel-transfer matrices. The two approaches therefore encode different notions of structure. Tensorion adapts Muon to the tensor organization of the parameters, whereas Muon-C aligns Muon with the operator those parameters implement.

\emph{Spectral structure of convolution.} Fourier analysis has long been used to compute, constrain, or bound singular values of convolutional layers \citep{miyato2018spectral,sedghi2019singular,singla2021fantastic}. We use the same transfer-matrix structure to design an optimizer update. Unlike prior uses of convolution spectra as constraints or diagnostics, Muon-C uses the transfer blocks to define the matrices that receive independent optimizer updates. Critical sampling provides a dimension-matched representation of these frequency-wise channel maps for a finite stored kernel.

%%%%%%%%%%%%%%%%%%%%%%%%%%%%%%%%%%%%%%%%%%%%%%%%%%%%%%%%%%%%%%%%%%%%%%%%%%%%%%%%%%%%%%%
\section{Muon as a Spectral-Norm Oracle}
\label{sec:prelim}

We begin with the interpretation of Muon that motivates our operator-aligned construction. The key observation is that Muon's polar update is the exact linear minimization oracle under a spectral-norm constraint. This interpretation makes the matrix representation part of the optimizer geometry, because the matrices, and more generally the independently constrained matrix blocks, on which Muon acts determine the geometry of its update.

We adopt the following notation throughout. For a matrix \(A\), let \(\opnorm{A}\) denote its spectral norm, \(A^*\) its conjugate transpose, and \(\langle A,B\rangle_F=\mathrm{tr}(A^*B)\) the Frobenius inner product; we take its real part in optimization objectives. For real-valued matrices, \(A^*\) reduces to the transpose. For a convolutional kernel, write \([k]=\{0,\ldots,k-1\}\), \(\Omega_k=[k_h]\times[k_w]\), and \(n=k_hk_w\). We represent the kernel as $W=\{W_{uv}\}_{(u,v) \in \Omega_k}$, with $W_{uv}\in\mathbb R^{C_{\mathrm{out}}\times C_{\mathrm{in}}}$, or equivalently \(W\in\mathbb R^{C_{\mathrm{out}}\times C_{\mathrm{in}}\times k_h\times k_w}\).

Muon replaces the momentum of a matrix-shaped weight by its polar factor. For \(M\in\mathbb C^{m\times d}\) with compact singular value decomposition \(M=P\Sigma Q^*\), define the hard-polar map $\operatorname{polar}(M)=PQ^*$. Given gradient \(G^t\), momentum coefficient \(\beta\), learning rate \(\eta\), and a shape-dependent scale \(s\), a Muon step takes the form
\[
M^t=\beta M^{t-1}+(1-\beta)G^t,
\qquad
X^{t+1}=X^t - \eta s\,\operatorname{polar}(M^t).
\]
In practice, computing an exact singular value decomposition at every iteration is expensive, so the polar factor is approximated using a small number of Newton-Schulz iterations on the normalized momentum \citep{jordan2024muon,higham2008functions}.

The polar direction has a precise optimization interpretation. Given a descent direction budget measured in the spectral norm, one solution of $\min_{\|U\|_{\mathrm{op}}\le \rho} \langle M,U\rangle_F$ is $U^\star=-\rho \operatorname{polar}(M)$, with the usual nonuniqueness when \(M\) is rank deficient. Thus, Muon can be viewed as a linear minimization oracle, or equivalently, a steepest linearized descent direction, under a spectral-norm trust region, rather than merely as a heuristic normalization of the momentum \citep{bernstein2024oldnorm,pethick2025normlmo,chen2025muonconstraints}.

This interpretation makes the matrix representation an intrinsic part of the optimizer. The constraint set \(\{U:\|U\|_{\mathrm{op}}\le\rho\}\) depends on which quantities are assigned to the rows and columns of the matrix. More generally, if a parameter is represented by multiple independently constrained matrix blocks, their partition determines a product of spectral-norm balls and hence a different linear minimization oracle. Two invertible representations of the same coefficients can therefore induce different Muon updates even though they parameterize the same underlying object. For an ordinary matrix, the representation is usually given. For a structured parameter such as a convolutional kernel, however, which matrices Muon should act on becomes an optimizer-design choice. Section~\ref{sec:background} develops this choice from the translation-equivariant operator implemented by convolution.

%%%%%%%%%%%%%%%%%%%%%%%%%%%%%%%%%%%%%%%%%%%%%%%%%%%%%%%%%%%%%%%%%%%%%%%%%%%%%%%%%%%%%%%
\section{From Muon to Operator-Aligned Geometry for Convolution}
\label{sec:background}

We now develop the operator-aligned matrix geometry for convolutional kernels. The key question is which matrices should define Muon's spectral geometry for a convolutional operator. We proceed in three steps. We first show that standard unfolding corresponds to a local patch-to-output map, then use translation equivariance to identify frequency-wise channel-transfer matrices as the natural linear maps implemented by convolution, and finally show that these maps induce a genuinely different Muon geometry only when they are polarized as independent blocks.

%%%%%%%%%%%%%%%%%%%%%%%%%%%%%%%%%%%%%%%%%%%%%%%%%%%%%%%%%%%%%%%%%%%%%%%%%%%%%%%%%%%%%%%
\subsection{Muon Geometry Depends on Matrix Representation}

The standard representation concatenates the spatial coefficient matrices into
\[
B_W=
\begin{bmatrix}
W_{00} & W_{01} & \cdots & W_{k_h-1,k_w-1}
\end{bmatrix}
\in\R^{C_{\mathrm{out}}\times k_hk_wC_{\mathrm{in}}}.
\]
This unfolding represents the linear map from a vectorized local input patch to one output vector. Applying Muon to \(B_W\) therefore defines an exact spectral-norm oracle for this local patch-to-output geometry. However, this representation, by choosing the rows and columns on which the polar update acts, selects a particular spectral-norm geometry for the kernel.

The convolution kernel also parameterizes a larger translation-equivariant linear operator over the entire feature map. On an \(H \times W\) grid, forming this operator explicitly would produce a \(C_{\mathrm{out}}HW\times C_{\mathrm{in}}HW\) matrix, which is resolution dependent and impractical to polarize during training. Its translation symmetry, however, decomposes this large operator into small, resolution-independent channel maps. These maps provide the operator-aligned alternative to the local patch geometry, as we develop next.

%%%%%%%%%%%%%%%%%%%%%%%%%%%%%%%%%%%%%%%%%%%%%%%%%%%%%%%%%%%%%%%%%%%%%%%%%%%%%%%%%%%%%%%
\subsection{Translation Equivariance Identifies Fourier Channel Maps}
\label{sec:why-fourier}

Translation equivariance identifies the Fourier basis as the natural representation of the convolution operator. Because convolution commutes with every spatial shift, it preserves the joint eigenspaces of the shift operators, which are precisely the Fourier modes. The layer operator is therefore block diagonal in this basis, with one $C_{\mathrm{out}}\times C_{\mathrm{in}}$ channel map at each spatial frequency. Figure~\ref{fig:why-fourier} illustrates this decomposition for a circular layer with $C_{\mathrm{in}}=C_{\mathrm{out}}=3$, a $3\times3$ kernel, and a $6\times6$ feature grid.

\begin{figure}[t!]
\centering
\includegraphics[width=0.95\linewidth,height=3.5in]{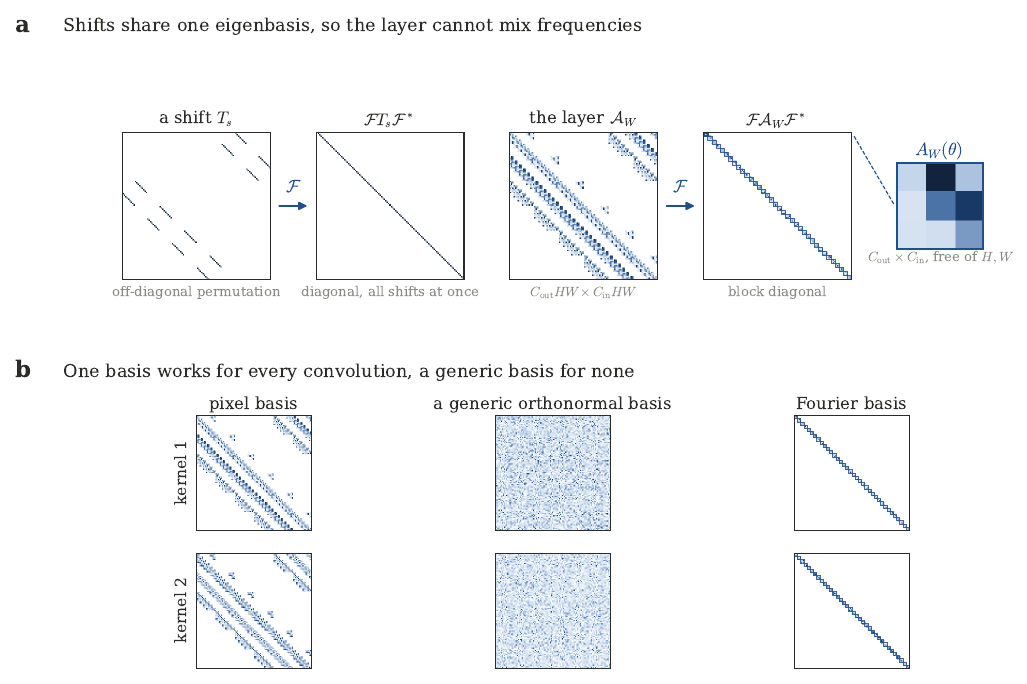}
\caption{Translation equivariance identifies the Fourier channel maps. (a) The Fourier transform simultaneously diagonalizes spatial shifts, decomposing convolution into resolution-independent frequency-wise $C_{\mathrm{out}}\times C_{\mathrm{in}}$ channel-transfer blocks. (b) The same Fourier basis block diagonalizes different convolutional kernels, whereas a generic orthonormal basis does not.}
\label{fig:why-fourier}
\vspace{-0.2in}
\end{figure}

Let $x$ be an input feature map on the infinite lattice $\mathbb{Z}^2$. Define the convolutional layer $\mathcal{A}_W$ by
\[
(\mathcal{A}_Wx)(p)=\sum_{(u,v)\in\Omega_k}W_{uv}\,x\bigl(p-(u,v)\bigr),
\]
and let $T_s$ be the shift $(T_sx)(p)=x(p-s)$. Direct substitution gives $\mathcal{A}_WT_s=T_s\mathcal{A}_W$ for every $s\in\mathbb{Z}^2$. Thus convolution cannot mix the distinct joint eigenspaces of the shift family.

For a fixed frequency $\theta\in[0,2\pi)^2$, the complex exponential $e_\theta(p)=e^{\mathrm{i}\inner{p}{\theta}}$ satisfies $T_se_\theta=e^{-\mathrm{i}\inner{s}{\theta}}e_\theta$. Each $e_\theta$ is a simultaneous eigenvector of every shift, and different frequencies carry different eigenvalue patterns. Hence $\mathcal{A}_W$ acts independently within each frequency.

Only the channel index remains within a frequency. Feeding $x(p)=e_\theta(p)c$ for $c\in\C^{C_{\mathrm{in}}}$ into the layer gives
\begin{equation} \label{eq:transfer-matrix}
\mathcal{A}_W\bigl(e_\theta c\bigr) = e_\theta\,A_W(\theta)c, \qquad 
A_W(\theta)=\sum_{(u,v)\in\Omega_k}W_{uv}\,e^{-\mathrm{i}(u\theta_1+v\theta_2)}.
\end{equation}
The $C_{\mathrm{out}}\times C_{\mathrm{in}}$ matrix $A_W(\theta)$ is the channel-transfer matrix at frequency $\theta$. These matrices are determined entirely by the stored kernel coefficients and are independent of feature-map resolution \citep{sedghi2019singular,singla2021fantastic}. They therefore provide a resolution-independent representation of the linear maps implemented by convolution on its invariant translation modes.

This decomposition has two consequences for the choice of Muon geometry. First, the Fourier basis simultaneously block-diagonalizes every translation-equivariant convolution, whereas a generic orthonormal basis does not preserve this common block structure. Figure~\ref{fig:why-fourier}(b) illustrates this distinction for two kernels. Second, and more importantly, each $A_W(\theta)$ is the channel map actually implemented by convolution on a translation mode. These invariant channel maps provide an operator-aligned choice on which to define Muon's spectral geometry.

%%%%%%%%%%%%%%%%%%%%%%%%%%%%%%%%%%%%%%%%%%%%%%%%%%%%%%%%%%%%%%%%%%%%%%%%%%%%%%%%%%%%%%%
\subsection{Changing Basis Is Not Enough: Blocks Define the Geometry}
\label{sec:basis-change}

Section~\ref{sec:why-fourier} identifies the frequency-wise channel-transfer matrices as the natural operator-aligned maps for Muon. However, expressing the kernel in the Fourier basis alone does not define a new Muon geometry. If the transformed momentum is treated as a single matrix and polarized globally, the resulting direction is exactly the ordinary unfolded-Muon direction, as shown by the next result.

\begin{proposition}[Unitary equivariance of the polar map]
\label{prop:polar-unitary-equivariance}
Let $X\in\C^{m\times d}$, and let $L\in\C^{m\times m}$ and $R\in\C^{d\times d}$ be unitary. For the canonical partial polar factor,
\[
\operatorname{polar}(LXR) = L\,\operatorname{polar}(X)\,R.
\]
Consequently, $L^*\operatorname{polar}(LXR)R^*=\operatorname{polar}(X)$.
\end{proposition}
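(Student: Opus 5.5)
The plan is to push the compact SVD of $X$ through the unitary factors and then invoke uniqueness of the canonical partial polar factor. Write $X=P\Sigma Q^*$ with $P\in\C^{m\times r}$ and $Q\in\C^{d\times r}$ having orthonormal columns, $\Sigma\in\R^{r\times r}$ diagonal with strictly positive entries, and $r=\operatorname{rank}X$. Then
\[
LXR=(LP)\,\Sigma\,(R^*Q)^*,
\]
and since $L$ and $R^*$ are unitary, $LP$ and $R^*Q$ again have orthonormal columns: $(LP)^*(LP)=P^*L^*LP=I_r$, and similarly $(R^*Q)^*(R^*Q)=I_r$. So this display is itself a compact SVD of $LXR$. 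The definition of $\operatorname{polar}$ then gives
\[
\operatorname{polar}(LXR)=(LP)(R^*Q)^*=LPQ^*R=L\,\operatorname{polar}(X)\,R .
\]

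The only real obstacle is well-definedness. When singular values repeat, or $X$ is rank deficient, the compact SVD is not unique, so I must check that $PQ^*$ does not depend on the chosen SVD; otherwise the computation above compares two arbitrary choices. I would settle this with a basis-free characterization. Since $X^*X=Q\Sigma^2Q^*$, we have $(X^*X)^{1/2}=Q\Sigma Q^*$, whose Moore--Penrose pseudoinverse is $Q\Sigma^{-1}Q^*$. Hence
\[
PQ^*=X\bigl((X^*X)^{1/2}\bigr)^{+},
\]
and the right-hand side is defined intrinsically from $X$, so the canonical partial polar factor is well defined.

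This formula also gives a second, SVD-free proof of the equivariance. First, $(LXR)^*(LXR)=R^*X^*XR$. Conjugation by a unitary commutes with the principal square root, so $\bigl((LXR)^*(LXR)\bigr)^{1/2}=R^*(X^*X)^{1/2}R$, and it also commutes with the pseudoinverse, so $\bigl(R^*(X^*X)^{1/2}R\bigr)^{+}=R^*\bigl((X^*X)^{1/2}\bigr)^{+}R$. Substituting,
\[
\operatorname{polar}(LXR)=LXR\,R^*\bigl((X^*X)^{1/2}\bigr)^{+}R=L\,\operatorname{polar}(X)\,R .
\]

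The consequence follows by multiplying on the left by $L^*$ and on the right by $R^*$, using $L^*L=I$ and $RR^*=I$. I will also remark that this is precisely why the globally polarized Fourier-transformed momentum returns to the unfolded-Muon direction: take $L$ to be the identity and $R$ the unitary (block) DFT acting on the $k_hk_wC_{\mathrm{in}}$ columns.
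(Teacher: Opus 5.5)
Your proof is correct, and its main argument matches the paper's: the compact SVD $X=P\Sigma Q^*$ becomes the compact SVD $(LP)\Sigma(R^*Q)^*$ of $LXR$, which gives $\operatorname{polar}(LXR)=L\,\operatorname{polar}(X)\,R$. Your identity $PQ^*=X\bigl((X^*X)^{1/2}\bigr)^{+}$ is a useful addition, because it shows the result does not depend on the choice of SVD (which the paper assumes without proof) and it covers the zero case that the paper treats separately.
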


For a convolutional kernel, concatenating the $n=k_hk_w$ spatial coefficient matrices into the local patch map yields
\[
B_M=
\begin{bmatrix}
M_{00} & M_{01} & \cdots & M_{k_h-1,k_w-1}
\end{bmatrix}
\in\C^{C_{\mathrm{out}}\times nC_{\mathrm{in}}}.
\]
Mixing the spatial offsets by an orthonormal DFT is the particular right-unitary transformation \(B_M\mapsto B_MR_{\mathrm{DFT}}\). Proposition~\ref{prop:polar-unitary-equivariance} implies that applying a single polar map and then undoing this transformation returns \(\operatorname{polar}(B_M)\). Including the descent sign gives \(-\operatorname{polar}(B_M)\). Thus, this unitary row/column transformation cannot by itself produce a new global Muon geometry. What changes the geometry is the partition into independently polarized blocks.

To see how a distinct Muon geometry arises, we now shift attention from the coordinate representation to the partition into independently polarized blocks. Let $\mathcal{R}$ map a kernel to a collection of matrices, and let $\mathcal{R}^{-1}$ return those matrices to parameter space. Then
\[
\mathcal{U}_{\mathcal{R}}(M) = -\mathcal{R}^{-1}\!\left( \left\{\operatorname{polar}\bigl([\mathcal{R}(M)]_b\bigr)\right\}_{b} \right).
\]
For the left/right unitary representations considered here, including the spatial DFT acting on the spatial-offset block columns, \(\mathcal{R}\) preserves the coefficient-space inner product and the global polar direction. The essential choice is therefore not this particular coordinate transformation but the partition of \(\mathcal{R}(M)\) into independently polarized blocks.

\begin{figure}[t!]
\centering
\includegraphics[width=0.3\linewidth,height=1.5in]{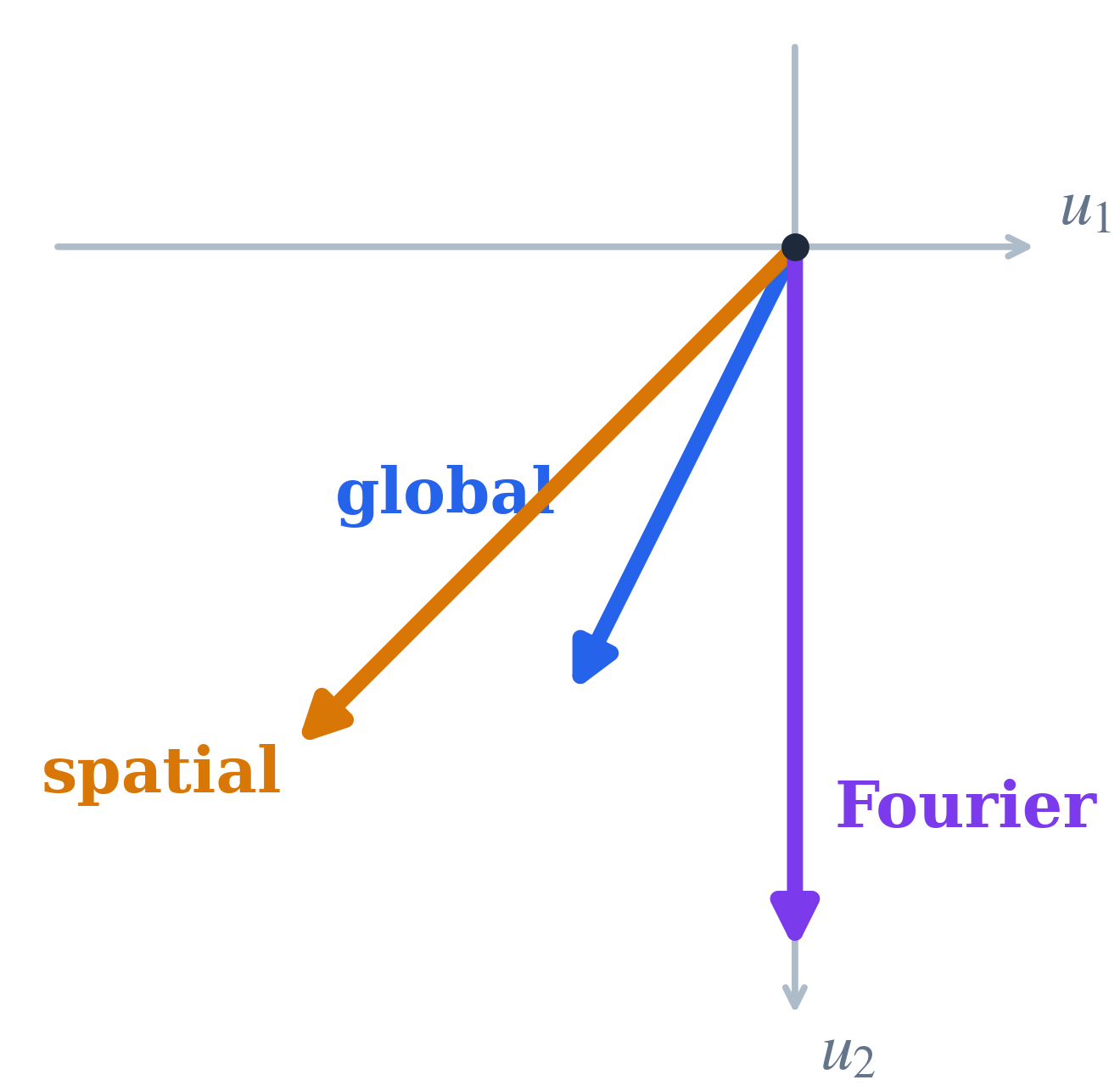}
\caption{Different block partitions induce different Muon directions. For the two-coefficient momentum \(M=(1,2)\), global unfolding, spatial partitioning, and critical-Fourier partitioning yield the directions \(-(1,2)/\sqrt{5}\), \(-(1,1)\), and \((0,-\sqrt{2})\), respectively. The three directions are pairwise non-collinear, illustrating that changing the independently polarized block partition changes the Muon geometry even when the underlying kernel coefficients are the same.}
\label{fig:three-muon-directions}
\vspace{-0.2in}
\end{figure}

The partition changes the geometry because it changes the constraint set. A single block imposes one spectral-norm constraint $\opnorm{B_U}\leq\rho$, whereas a partition into multiple blocks imposes one constraint per block, producing a product of spectral-norm balls. These constraint sets define genuinely different linear minimization oracles even when the underlying coordinate transformation is unitary. Blockwise polarization also provides a useful interpretation of this geometry. Each polarized block is a partial isometry, so the blocks contribute at a common spectral scale rather than retaining the relative magnitudes present in the momentum. Our proposed Muon-C therefore extends Muon's within-matrix spectral equalization to the operator-aligned decomposition across translation modes.

We focus on three block partitions of the same convolutional kernel. The first is global unfolding, which treats $B_M$ as a single block and is the exact linear minimization oracle for the local patch norm $\patchnorm{U} = \opnorm{B_U}$. The second is critical-Fourier partitioning, used by Muon-C, which independently polarizes one \(C_{\mathrm{out}}\times C_{\mathrm{in}}\) channel map at each critical translation frequency, thereby aligning the blocks with the invariant maps of the convolution operator. The third is spatial partitioning, used by Muon-S, which independently polarizes one \(C_{\mathrm{out}}\times C_{\mathrm{in}}\) channel matrix at each stored spatial offset. This is the spatial-block polar geometry of the Conv2D duality map in \citet{bernstein2025modular}, up to layerwise scaling. Muon-S matches Muon-C in block count and block shape while organizing blocks by spatial offset, providing a baseline for isolating the role of Fourier organization. Figure \ref{fig:three-muon-directions} gives a minimal two-coefficient example in which these three partitions produce pairwise non-collinear directions, demonstrating that changing the block partition alone can change the Muon update.

The operator view therefore identifies independent frequency-wise channel maps, rather than Fourier coordinates alone, as the appropriate Muon geometry for convolution. One challenge remains. A finite \(k_h\times k_w\) kernel has only \(k_hk_w\) free coefficient matrices, whereas its transfer function is defined over a continuum of frequencies. Section \ref{sec:geometry-practice} resolves this finite-support mismatch through critical frequency sampling, completing the Muon-C construction by combining Fourier/operator alignment, independent frequency-wise polarization, and exact preservation of the original kernel support.

%%%%%%%%%%%%%%%%%%%%%%%%%%%%%%%%%%%%%%%%%%%%%%%%%%%%%%%%%%%%%%%%%%%%%%%%%%%%%%%%%%%%%%%
\section{Muon-C: Critical Fourier Geometry for Finite Convolutional Kernels}
\label{sec:geometry-practice}

We next develop Muon-C and show how the operator-aligned geometry can be realized for a finite convolutional kernel. We proceed in four steps. We first introduce the critical Fourier grid that makes independent frequency-wise polarization compatible with finite kernel support. We then define the resulting Muon-C update, and establish its exact linear-oracle characterization as well as its guarantee relative to the continuous convolution-operator geometry. Finally, we present the complete algorithm along with practical implementation.

%%%%%%%%%%%%%%%%%%%%%%%%%%%%%%%%%%%%%%%%%%%%%%%%%%%%%%%%%%%%%%%%%%%%%%%%%%%%%%%%%%%%%%%
\subsection{From Continuous Operator Geometry to Critical Sampling}
\label{sec:natural-geometry}

We first formalize the natural operator geometry that Muon-C seeks to approximate. For a kernel \(U\), the transfer matrix \(A_U(\theta)\) in \eqref{eq:transfer-matrix} describes the channel map implemented by convolution at frequency \(\theta\). This motivates the convolution norm
\[
\convnorm{U} = \sup_{\theta\in[0,2\pi)^2} \opnorm{A_U(\theta)} .
\]
This quantity is the induced \(\ell_2\) norm of multichannel convolution on the infinite spatial lattice. It also upper-bounds the corresponding operator norm on any finite zero-padded grid, while for circular convolution it reduces to a maximum over discrete frequencies. Moreover, changing from corner-indexed to centered kernel offsets only multiplies \(A_U(\theta)\) by a unit-modulus scalar and therefore leaves its singular values unchanged. Thus, \(\convnorm{\cdot}\) is an intrinsic, resolution-independent property of the convolutional kernel and provides the operator-level analogue of the matrix spectral norm underlying ordinary Muon.

Combining this operator norm with the finite kernel support gives the natural convolutional oracle
\begin{equation*} \label{eq:natural-finite-lmo}
\min_{U\in\R^{C_{\mathrm{out}}\times C_{\mathrm{in}}\times k_h\times k_w}} \ \inner{M}{U}
\quad \text{subject to} \quad
\convnorm{U}\leq\rho.
\end{equation*}
This is the convolutional counterpart of the spectral-norm oracle that defines Muon in Section \ref{sec:prelim}. An idealized full-grid construction would treat the channel map at every frequency as an independent variable, even though a finite kernel must generate all of these maps from the same \(n=k_hk_w\) stored coefficient matrices. Independently polarizing the full-grid blocks therefore generally yields an inverse transform with nonzero coefficients outside \(\Omega_k\), so the result is not a feasible finite-support update without projection. Appendix~\ref{app:theory} gives the formal full-grid oracle and its proof, while Appendix \ref{app:empirical-diagnostics} measures this obstruction in trained U-Net momenta.

Muon-C resolves this mismatch using the critical \(k_h\times k_w\) Fourier grid. The grid contains exactly \(n=k_hk_w\) frequency samples, matching the number of stored coefficient matrices, and the corresponding spatial DFT is unitary and bijective on those coefficients. Its inverse maps every collection of critical-grid blocks exactly back to the original \(k_h \times k_w\) support, while each block remains an evaluation of the convolutional transfer matrix \(A_U(\theta)\). The critical grid therefore provides an operator-aligned, bijective, and support-preserving representation on which the frequency blocks can be polarized independently. Section \ref{sec:critical} uses this representation to define the Muon-C update.

%%%%%%%%%%%%%%%%%%%%%%%%%%%%%%%%%%%%%%%%%%%%%%%%%%%%%%%%%%%%%%%%%%%%%%%%%%%%%%%%%%%%%%%
\subsection{Muon-C Update}
\label{sec:critical}

We now define the Muon-C geometry on the critical Fourier grid. Let
\[
\theta_p=\frac{2\pi p}{k_h}, \qquad \phi_q=\frac{2\pi q}{k_w}, \qquad p\in[k_h], \; q\in[k_w],
\]
giving \(n=k_hk_w\) critical frequencies in total. The corresponding critically sampled convolution norm is
\begin{equation} \label{eq:sampled-norm}
\samplenorm{U} = \max_{p\in[k_h],q\in[k_w]} \opnorm{A_U(\theta_p,\phi_q)} .
\end{equation}
This norm measures the largest channel-transfer operator norm over the critical grid and defines the trust-region geometry used by Muon-C.

To express this geometry in the coordinates used by the algorithm, let \(\mathcal F_{\Omega_k}\) denote the orthonormal two-dimensional DFT over the \(k_h\times k_w\) kernel grid, and write
\[
\widehat U(p,q) = \bigl[ \mathcal F_{\Omega_k}(U) \bigr](p,q) = \frac{1}{\sqrt n} A_U(\theta_p,\phi_q).
\]
It follows that
\begin{equation} \label{eq:sample-kfb-relation}
\| U \|_{\mathrm{samp}} = \sqrt n \, \max_{p,q} \bigl \| \widehat U(p,q) \bigr \|_{\mathrm{op}} .
\end{equation}
The factor \(\sqrt n\) results from the orthonormal DFT convention and rescales the corresponding trust-region radius without changing its blockwise polar direction. Importantly, up to this common normalization, the matrices \(\widehat U(p,q)\) retain their operator interpretation as evaluations of the same transfer matrix \(A_U(\theta)\) that defines the continuous convolution norm \(\convnorm{U}\). At the same time, the critical DFT is unitary and bijective on the stored kernel coefficients.

In the representation framework of Section \ref{sec:basis-change}, Muon-C therefore takes \(\mathcal{R}=\mathcal F_{\Omega_k}\) and treats each \(C_{\mathrm{out}}\times C_{\mathrm{in}}\) frequency-wise channel map as an independent block. Given kernel momentum \(M^t\), let
\[
\widehat M^t = \mathcal F_{\Omega_k}(M^t).
\]
Muon-C polarizes each critical-frequency block independently,
\[
\widehat O^t(p,q) = \operatorname{polar} \left(\widehat M^t(p,q)\right), \qquad p\in[k_h], \; q\in[k_w],
\]
and maps the resulting blocks back to the kernel domain,
\begin{equation*}
O^t = \mathcal F_{\Omega_k}^{-1} \bigl(\widehat O^t\bigr).
\end{equation*}
Figure \ref{fig:pipeline} illustrates this update pipeline, and contrasts it with global unfolded Muon that applies a single polar map to the local patch matrix.

\begin{figure}[t!]
\centering
\includegraphics[width=\linewidth]{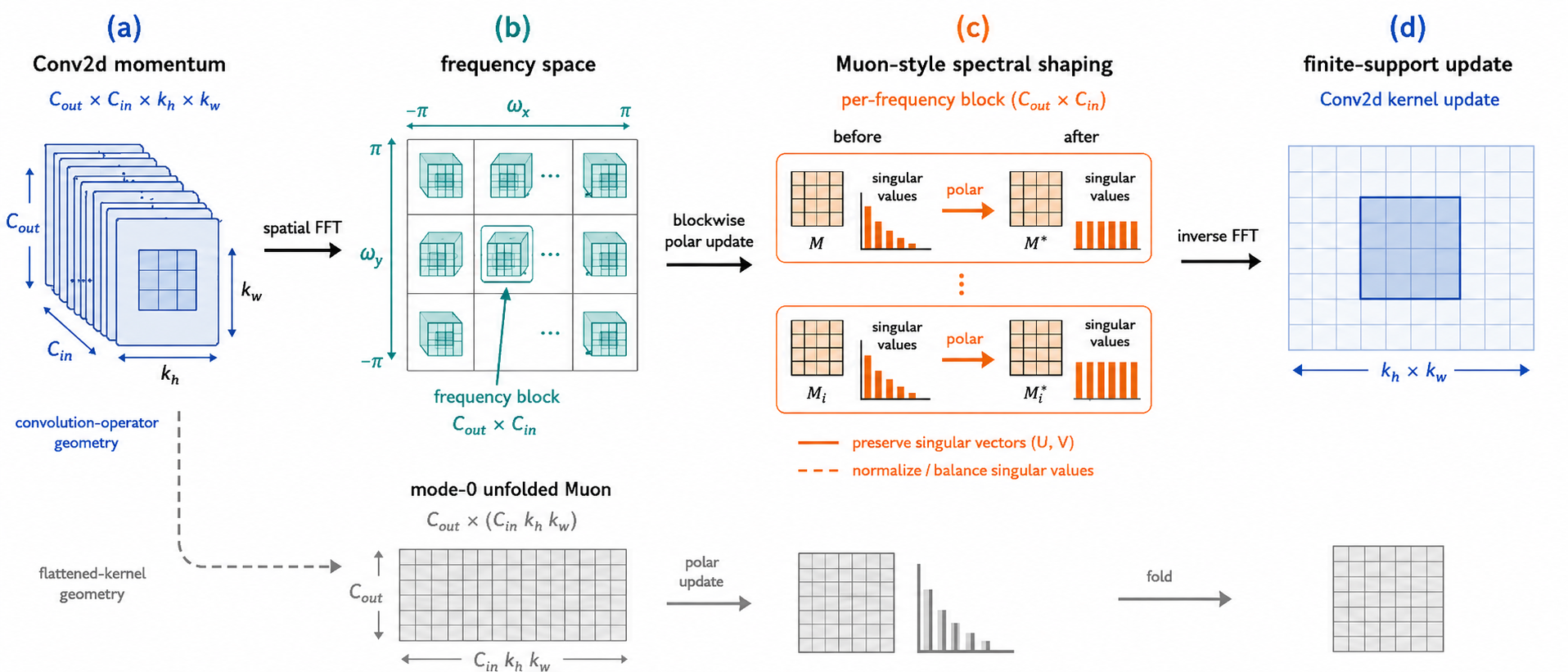}
\caption{Muon-C update on the critical Fourier grid. Muon-C transforms the kernel momentum into frequency-wise channel maps on the critical grid, polarizes these blocks independently, and applies the inverse transform to obtain an update on the original finite kernel support. In contrast, unfolded Muon treats the local patch map as a single block and applies one global polar operation.}
\label{fig:pipeline}
\vspace{-0.2in}
\end{figure}

Because the critical DFT is bijective, \(O^t\) has exactly the same \(k_h\times k_w\) spatial support as the original kernel and requires no cropping or support projection. Moreover, for real-valued momentum, the Fourier blocks are conjugate symmetric, and the canonical polar map preserves this symmetry. Hence the inverse transform returns a real-valued kernel update. Muon-C therefore combines operator-aligned frequency blocks, independent blockwise polarization, and exact preservation of the finite kernel support. Section \ref{sec:theoretical-properties} shows that this direction is the exact linear minimization oracle for the sampled geometry in \eqref{eq:sampled-norm} and quantifies its guarantee relative to the continuous convolution norm.

%%%%%%%%%%%%%%%%%%%%%%%%%%%%%%%%%%%%%%%%%%%%%%%%%%%%%%%%%%%%%%%%%%%%%%%%%%%%%%%%%%%%%%%
\subsection{Theoretical Properties of Muon-C}
\label{sec:theoretical-properties}

We next establish two properties of Muon-C. First, its update is the exact linear minimization oracle under the sampled convolution norm. Second, this sampled norm provides a controlled approximation to the continuous convolution norm, with a worst-case approximation factor no larger than that of global unfolding and strictly smaller for \(3\times3\) kernels.

We begin with the exact oracle characterization. Recall from \eqref{eq:sample-kfb-relation} that, under the orthonormal kernel DFT, $\| U \|_{\mathrm{samp}} = \sqrt{n}\max_{p,q}
\| \widehat U(p,q) \|_{\mathrm{op}}$, with $n=k_hk_w$. The sampled-norm constraint therefore separates across the critical-frequency blocks, yielding the following result.

\begin{proposition}[Exact sampled-convolution oracle]
\label{prop:exact-lmo}
Let \(M\in\mathbb{R}^{C_{\mathrm{out}}\times C_{\mathrm{in}}\times k_h\times k_w}\) be a kernel momentum and let \(\widehat M=\mathcal F_{\Omega_k}(M)\). Then one solution of
\begin{equation*}
\min_{U} \langle M,U\rangle \qquad \text{subject to} \qquad \| U \|_{\mathrm{samp}} \leq \rho
\end{equation*}
is given in the Fourier domain by
\begin{equation}
\label{eq:sample-lmo}
\widehat U_f^\star(p,q) = -\frac{\rho}{\sqrt n} \operatorname{polar}\!\left(\widehat M(p,q)\right), \qquad p\in[k_h], \; q\in[k_w].
\end{equation}
\end{proposition}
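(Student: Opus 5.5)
The plan is to move the whole problem into the critical Fourier domain, where both the objective and the constraint separate across frequency blocks, and then solve each block with the matrix spectral-norm oracle from Section~\ref{sec:prelim}.

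\textbf{Step 1: transfer the problem.} The orthonormal DFT \(\mathcal F_{\Omega_k}\) acts only on the spatial offsets and is unitary on the stored coefficients. Hence, for real \(M\) and \(U\), Parseval gives
\[
\langle M,U\rangle=\mathrm{Re}\sum_{p,q}\bigl\langle \widehat M(p,q),\widehat U(p,q)\bigr\rangle_F .
\]
By \eqref{eq:sample-kfb-relation}, the constraint \(\samplenorm{U}\le\rho\) is equivalent to requiring \(\opnorm{\widehat U(p,q)}\le\rho/\sqrt n\) for every \((p,q)\).

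\textbf{Step 2: relax and solve blockwise.} Consider first the relaxed problem over arbitrary complex block collections \(\{\widehat U(p,q)\}\), dropping the requirement that they come from a real kernel. The relaxed objective is a sum of separate terms and the constraint set is a product of spectral-norm balls. It therefore suffices to minimize each term \(\mathrm{Re}\langle \widehat M(p,q),\widehat U(p,q)\rangle_F\) over \(\opnorm{\widehat U(p,q)}\le\rho/\sqrt n\). For each block I would invoke the Muon oracle of Section~\ref{sec:prelim} in its complex form, justified by the duality of the operator and nuclear norms:
\[
\mathrm{Re}\,\mathrm{tr}(A^*B)\ \ge\ -\|A\|_*\,\opnorm{B}.
\]
Equality holds at \(B=-(\rho/\sqrt n)\operatorname{polar}(A)\), since \(\mathrm{tr}(A^*\operatorname{polar}(A))=\mathrm{tr}(\Sigma)=\|A\|_*\). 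This yields exactly \eqref{eq:sample-lmo} as a minimizer of the relaxation.

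\textbf{Step 3: show the minimizer is feasible for the original problem.} This is the main obstacle: the original problem ranges only over real kernels \(U\), so the relaxed minimizer must be shown to come from one. Real \(M\) has conjugate-symmetric blocks, \(\widehat M(-p,-q)=\overline{\widehat M(p,q)}\) with indices taken mod \(k_h,k_w\). The canonical polar map commutes with entrywise conjugation, because conjugating the SVD \(P\Sigma Q^*\) gives an SVD of \(\overline{A}\) and the compact polar factor does not depend on which SVD is chosen. So \(\widehat U^\star\) inherits conjugate symmetry, and its inverse DFT is a real kernel \(U^\star\) supported on \(\Omega_k\). Bijectivity of the critical DFT guarantees there is no leakage outside the support.

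\textbf{Step 4: conclude.} Since \(U^\star\) is feasible for the original problem and attains the lower bound of the larger relaxed problem, it is optimal. I would also remark that the self-conjugate frequencies need no separate treatment: there the block is real, and the polar factor of a real matrix is real.
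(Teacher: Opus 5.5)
Your proof is correct and follows the same route as the paper's proof. Both use Parseval to split the objective by block, \eqref{eq:sample-kfb-relation} to turn the constraint into per-block spectral-norm bounds, spectral--nuclear duality within each block, and conjugate symmetry preserved by the polar map to make the inverse DFT real; your explicit relax-to-complex-blocks argument in Steps 2--4 just spells out the feasibility reasoning that the paper's final sentence leaves implicit.
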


Up to the common scale \(\rho/\sqrt n\), Proposition~\ref{prop:exact-lmo} is exactly the Muon-C direction defined in Section~\ref{sec:critical}. The result follows from Parseval's identity and spectral-nuclear norm duality, applied independently to each critical-frequency block. For real-valued kernels, conjugate symmetry of the Fourier blocks is preserved by the canonical polar map, so the resulting spatial update remains real valued.

Having established exactness for the sampled norm, we next investigate how closely this norm controls the continuous convolution norm. To compare the three block partitions introduced in Section~\ref{sec:basis-change}, recall the local patch norm
$
\| U \|_{\mathrm{patch}} = \| B_U \|_{\mathrm{op}},
$
associated with global unfolded Muon, and define the spatial-block norm
\[
\| U \|_{\mathrm{sp}} = \max_{(u,v)\in\Omega_k} \| U_{uv} \|_{\mathrm{op}},
\]
associated with Muon-S. Muon-C is similarly associated with the sampled convolution norm \(\|\cdot\|_{\mathrm{samp}}\) in \eqref{eq:sampled-norm}.

For a square kernel, the Conv2D norm of \citet{bernstein2025modular} satisfies
\[
\mathrm{Conv2D.norm}(U) = n\sqrt{\frac{C_{\rm in}}{C_{\rm out}}}\,\|U\|_{\rm sp}, \qquad n=k^2,
\]
since $\|A\|_{\rm RMS\to RMS}=\sqrt{C_{\rm in}/C_{\rm out}}\,\|A\|_{\rm op}$. Its duality map is $n^{-1}\sqrt{C_{\rm out}/C_{\rm in}}\,\operatorname{polar}(M_{uv})$ at each spatial offset. The kernel-area factor $n$ also appears in the spatial upper bound below.

The remaining issue is how much the transfer matrix can grow between the critical-grid samples. Trigonometric interpolation controls this growth through the Lebesgue constant. Define the one-dimensional cardinal functions
\[
\ell_p^{(k)}(\theta) = \frac{1}{k} \sum_{r=0}^{k-1} e^{-\mathrm{i}r(\theta-2\pi p/k)}, \qquad p\in[k],
\]
and the corresponding Lebesgue constant
\[
\Lambda_k = \sup_{\theta\in[0,2\pi)} \sum_{p=0}^{k-1} \left|\ell_p^{(k)}(\theta)\right|.
\]
The next theorem places all three finite-support geometries on a common scale relative to the continuous convolution norm.

\begin{theorem}[Unified finite-support norm comparison]
\label{thm:unified-norm-comparison}
Let \(n=k_hk_w\). Every kernel \(U\) supported on \(\Omega_k\) satisfies
\begin{equation} \label{eq:norm-hierarchy}
\| U \|_{\mathrm{sp}} \leq \| U \|_{\mathrm{patch}} \leq \| U \|_{\mathrm{samp}} \leq \| U \|_{\mathrm{conv}},
\end{equation}
and
\begin{equation} \label{eq:unified-upper-bounds}
\| U \|_{\mathrm{conv}} \leq \min\left\{ n \| U \|_{\mathrm{sp}}, \sqrt n\, \| U \|_{\mathrm{patch}}, \Lambda_{k_h}\Lambda_{k_w} \| U \|_{\mathrm{samp}} \right\}.
\end{equation}
Moreover,
\begin{equation} \label{eq:lebesgue-sqrt-bound}
\Lambda_k\leq\sqrt{k}, \qquad\text{and hence}\qquad \Lambda_{k_h}\Lambda_{k_w}\leq\sqrt n.
\end{equation}
For a \(3 \times 3\) kernel, \(\Lambda_3=5/3\), so the corresponding worst-case approximation factors for Muon-S, global unfolding, and Muon-C are \(9\), \(3\), and \(25/9\), respectively.
\end{theorem}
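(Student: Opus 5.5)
The plan is to prove the three claims separately: the chain \eqref{eq:norm-hierarchy}, the three upper bounds in \eqref{eq:unified-upper-bounds}, and the Lebesgue-constant facts. Each comparison comes from writing the transfer matrix $A_U(\theta)$ in terms of a different finite object: the blocks $U_{uv}$, the patch matrix $B_U$, or the critical samples $A_U(\theta_p,\phi_q)$.

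\emph{The chain \eqref{eq:norm-hierarchy}.}
\begin{itemize}[topsep=3pt]
\item $\|U\|_{\mathrm{sp}}\le\|U\|_{\mathrm{patch}}$: each $U_{uv}$ is a column block of $B_U$, so its operator norm is at most $\|B_U\|_{\mathrm{op}}$.
\item $\|U\|_{\mathrm{patch}}\le\|U\|_{\mathrm{samp}}$: write $B_UB_U^*=\sum_{(u,v)}U_{uv}U_{uv}^*$. Apply Parseval for the orthonormal DFT blockwise. Since $\widehat U(p,q)=n^{-1/2}A_U(\theta_p,\phi_q)$ is a unitary mixing of the offsets, we get $\sum_{u,v}U_{uv}U_{uv}^*=\sum_{p,q}\widehat U(p,q)\widehat U(p,q)^*=\frac1n\sum_{p,q}A_U(\theta_p,\phi_q)A_U(\theta_p,\phi_q)^*$. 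This is $\preceq \|U\|_{\mathrm{samp}}^2 I$, and taking the largest eigenvalue gives the claim.
\item $\|U\|_{\mathrm{samp}}\le\|U\|_{\mathrm{conv}}$: the critical grid is a subset of $[0,2\pi)^2$.
\end{itemize}

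\emph{The upper bounds \eqref{eq:unified-upper-bounds}.}
\begin{itemize}[topsep=3pt]
\item Spatial bound: the triangle inequality on $A_U(\theta)=\sum_{uv}U_{uv}e^{-\mathrm i(u\theta_1+v\theta_2)}$ gives $\le n\|U\|_{\mathrm{sp}}$.
\item Patch bound: $A_U(\theta)c=B_U(z_\theta\otimes c)$, where $z_\theta$ is the vector of the $n$ unit-modulus phases. Since $\|z_\theta\otimes c\|=\sqrt n\|c\|$, this gives $\|A_U(\theta)\|_{\mathrm{op}}\le\sqrt n\|B_U\|_{\mathrm{op}}$.
\item Sampled bound: $\theta_1\mapsto A_U(\theta)$ is a matrix trigonometric polynomial in $\mathrm{span}\{e^{-\mathrm i r\theta_1}:r\in[k_h]\}$, and likewise in $\theta_2$. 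The cardinal functions satisfy $\ell^{(k)}_p(2\pi p'/k)=\delta_{pp'}$ and span exactly this $k$-dimensional space, so tensor-product interpolation is exact:
\[
A_U(\theta)=\sum_{p,q}\ell^{(k_h)}_p(\theta_1)\,\ell^{(k_w)}_q(\theta_2)\,A_U(\theta_p,\phi_q).
\]
The triangle inequality then bounds $\|A_U(\theta)\|_{\mathrm{op}}$ by $\bigl(\sum_p|\ell_p|\bigr)\bigl(\sum_q|\ell_q|\bigr)\|U\|_{\mathrm{samp}}$, and taking the supremum gives $\Lambda_{k_h}\Lambda_{k_w}\|U\|_{\mathrm{samp}}$.
\end{itemize}

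\emph{The Lebesgue constants \eqref{eq:lebesgue-sqrt-bound}.} By Cauchy--Schwarz, $\sum_p|\ell_p(\theta)|\le\sqrt k\,(\sum_p|\ell_p(\theta)|^2)^{1/2}$. The vector $(\ell_p(\theta))_p$ is $k^{-1/2}$ times a unitary DFT of the unit-modulus vector $(e^{-\mathrm i r\theta})_r$, so $\sum_p|\ell_p(\theta)|^2=1$. This gives $\Lambda_k\le\sqrt k$, and hence $\Lambda_{k_h}\Lambda_{k_w}\le\sqrt n$.

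\emph{The value $\Lambda_3=5/3$.} This is the step needing actual care, since it requires locating the supremum. Use $|\sum_{r=0}^2e^{-\mathrm i rx}|=|1+2\cos x|$ to write
\[
3\sum_p|\ell_p(\theta)|=|1+2a|+|1+2b|+|1+2c|,
\]
with $a=\cos\theta$, $b=\cos(\theta-2\pi/3)$, $c=\cos(\theta+2\pi/3)$, and note $a+b+c=0$. The sum is invariant under shifts by $2\pi/3$ and under $\theta\mapsto-\theta$, so it suffices to take $\theta\in[0,\pi/3]$. On this interval the first two terms are nonnegative and $1+2c\le0$. The expression therefore equals $1+2(a+b-c)=1-4c$, which is increasing in $\theta$ and maximized at $\theta=\pi/3$, where $c=-1$. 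This gives $\Lambda_3=5/3$.

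Substituting $n=9$ and $\Lambda_3^2=25/9$ gives the factors $9$, $3$, and $25/9$.
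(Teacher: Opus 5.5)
Your proof is correct and follows the paper's argument essentially step for step: the column-block (equivalently PSD) bound for $\|U\|_{\mathrm{sp}}\le\|U\|_{\mathrm{patch}}$, Fourier orthogonality giving $\frac1n\sum_j A_jA_j^*=B_UB_U^*$, the factorization $A_U(\theta)=B_UZ_\theta$, exact cardinal interpolation on the critical grid, and Cauchy--Schwarz with $\sum_p|\ell_p^{(k)}(\theta)|^2=1$. The only difference is cosmetic and concerns $\Lambda_3$: you reduce to $\theta\in[0,\pi/3]$ by symmetry and maximize $1-4c$, whereas the paper argues that the three quantities $1+2\cos(\theta-2\pi p/3)$ sum to $3$, that at most one of them is negative, and that each is at least $-1$; this gives the same bound $5$, attained at $\theta=\pi$, which is equivalent to your $\theta=\pi/3$.
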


Theorem \ref{thm:unified-norm-comparison} quantifies how tightly each tractable geometry controls the continuous convolution norm. Global unfolding controls the local patch map with distortion \(\sqrt n\), whereas Muon-C directly controls the critical-frequency channel maps with distortion \(\Lambda_{k_h}\Lambda_{k_w}\). By \eqref{eq:lebesgue-sqrt-bound}, the Muon-C distortion is no larger than that of global unfolding for any finite kernel size. For the common \(3\times3\) kernel, the inequality is strict:
\[
\underbrace{\frac{25}{9}}_{\text{Muon-C}} < \underbrace{3}_{\text{unfolded Muon}} < \underbrace{9}_{\text{Muon-S}}.
\]
Thus, among the three block geometries we consider, the operator-aligned critical-frequency geometry provides the tightest worst-case control of the continuous convolution norm.

The norm comparison also yields a direct guarantee relative to the ideal continuous convolution oracle. Let \(U_g^\star\), \(U_f^\star\), and \(U_s^\star\) denote exact radius-\(\rho\) minimization oracles for the patch, sampled Fourier, and spatial-block norms, respectively, and define
\[
\kappa_g=\sqrt n, \qquad \kappa_f=\Lambda_{k_h}\Lambda_{k_w}, \qquad \kappa_s=n.
\]

\begin{corollary}[Continuous-convolution oracle guarantee]
\label{cor:unified-lmo}
For \(r\in\{\mathrm{g},\mathrm{f},\mathrm{s}\}\), let \(\widetilde U_r = U_r^\star/\kappa_r\). Then \(\widetilde U_r\) is feasible for the radius-\(\rho\) continuous convolution-norm ball and satisfies
\begin{equation} \label{eq:unified-lmo-factor}
-\langle M,\widetilde U_r \rangle \geq \frac{1}{\kappa_r} \max_{\convnorm{V}\leq\rho} -\langle M,V \rangle.
\end{equation}
\end{corollary}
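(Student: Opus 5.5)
The plan is to derive the corollary directly from Theorem~\ref{thm:unified-norm-comparison}, using two inequalities for each geometry $r$: a lower bound $\|U\|_r\le\|U\|_{\mathrm{conv}}$ and an upper bound $\|U\|_{\mathrm{conv}}\le\kappa_r\|U\|_r$. Here $\|\cdot\|_{\mathrm g}=\patchnorm{\cdot}$, $\|\cdot\|_{\mathrm f}=\samplenorm{\cdot}$, and $\|\cdot\|_{\mathrm s}=\spnorm{\cdot}$. The lower bounds come from the hierarchy \eqref{eq:norm-hierarchy}, since each of the three norms lies below $\convnorm{\cdot}$. The upper bounds are the three entries of the minimum in \eqref{eq:unified-upper-bounds}, each of which holds separately.

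\emph{Feasibility.} Since $U_r^\star$ lies in the radius-$\rho$ ball of $\|\cdot\|_r$, the upper bound gives
\[
\convnorm{\widetilde U_r}=\kappa_r^{-1}\convnorm{U_r^\star}\le \|U_r^\star\|_r\le\rho .
\]

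\emph{Value.} The lower bound gives the inclusion $\{V:\convnorm{V}\le\rho\}\subseteq\{V:\|V\|_r\le\rho\}$. Because $U_r^\star$ minimizes $\langle M,\cdot\rangle$ over the larger set, we get
\[
-\langle M,U_r^\star\rangle=\max_{\|V\|_r\le\rho}-\langle M,V\rangle\ \ge\ \max_{\convnorm{V}\le\rho}-\langle M,V\rangle .
\]
By linearity, $-\langle M,\widetilde U_r\rangle=\kappa_r^{-1}(-\langle M,U_r^\star\rangle)$, and combining this with the previous display yields \eqref{eq:unified-lmo-factor}.

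A few points need care. The maximum over the continuous ball is nonnegative (take $V=0$), so scaling by $1/\kappa_r>0$ preserves the inequality direction. The two maxima should be taken over the same real kernel space $\R^{C_{\mathrm{out}}\times C_{\mathrm{in}}\times k_h\times k_w}$. For $r=\mathrm f$, this requires the real-valuedness of the Fourier-domain oracle, which Proposition~\ref{prop:exact-lmo} and the conjugate-symmetry remark supply. Also, $\kappa_{\mathrm f}=\Lambda_{k_h}\Lambda_{k_w}$ must be the constant actually appearing in \eqref{eq:unified-upper-bounds}, which it is.

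There is no real obstacle once Theorem~\ref{thm:unified-norm-comparison} is granted; the substance lies entirely in that theorem. The only subtle point is the direction of the set inclusion. It needs the \emph{lower} comparison with $\convnorm{\cdot}$ (the restricted ball is smaller), while feasibility needs the \emph{upper} comparison. Each geometry must therefore be sandwiched against $\convnorm{\cdot}$ from both sides, and both bounds are available for all three geometries.
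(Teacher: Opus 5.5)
Your proposal is correct and follows the paper's own proof step for step. Theorem~\ref{thm:unified-norm-comparison} sandwiches each surrogate norm as $\|U\|_r\le\|U\|_{\mathrm{conv}}\le\kappa_r\|U\|_r$; the upper bound gives feasibility of $U_r^\star/\kappa_r$, the lower bound gives the ball inclusion so that the surrogate optimum dominates the continuous one, and dividing by $\kappa_r$ finishes. The nonnegativity remark is unnecessary, since dividing by any $\kappa_r>0$ preserves an inequality regardless of sign.
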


Corollary~\ref{cor:unified-lmo} gives \(1/\kappa_r\) as the fraction of the optimal continuous-oracle linear objective guaranteed by each surrogate geometry after rescaling to the same convolution-norm ball. For a \(3\times3\) kernel, these factors are
\[
\underbrace{\frac{9}{25}}_{\text{Muon-C}} > \underbrace{\frac{1}{3}}_{\text{unfolded Muon}} > \underbrace{\frac{1}{9}}_{\text{Muon-S}}.
\]
More generally, \(\Lambda_{k_h}\Lambda_{k_w}\leq\sqrt n\) implies that the worst-case continuous-oracle guarantee for Muon-C is at least as strong as that of global unfolding for every finite kernel size. The evaluated classification architectures also contain \(2\times2\), \(4\times4\), and \(7\times7\) kernels. For each of these square kernel sizes, \(\Lambda_k^2\leq k\) preserves the guarantee that the Muon-C distortion is no larger than the global-unfolding distortion. These guarantees concern exact, single-step linear minimization oracles. 
% These guarantees characterize the quality of a single linearized update. They do not by themselves imply faster nonconvex training.  
% Instead, they provide a geometric criterion that we examine empirically in Section \ref{sec:spatial-block-ablation}, by comparing candidate updates under a common convolution-operator budget.

\begin{table}[t!]
\centering
\scriptsize
\begin{tabular}{p{0.15\linewidth}p{0.20\linewidth}p{0.15\linewidth}p{0.12\linewidth}p{0.18\linewidth}}
\toprule
Geometry & Matrices seen by Muon & Trust norm & Distortion & Polar shape of exact LMO \\
\midrule
Global unfolded & One local patch map $B_M$ & $\patchnorm{U}$ & $\sqrt n$ & $\operatorname{polar}(B_M)$ \\
\method{} & Critical-frequency channel maps & $\samplenorm{U}$ & $\Lambda_{k_h}\Lambda_{k_w}$ & $\cF^{-1}\{\operatorname{polar}(\widehat M)\}$ \\
Spatial ablation & One channel map per offset & $\spnorm{U}$ & $n$ & $\{\operatorname{polar}(M_{uv})\}_{u,v}$ \\
\bottomrule
\end{tabular}
\par
\normalsize
\caption{Comparison of three Muon geometries under the continuous convolution norm. Muon-C has a worst-case approximation factor no larger than global unfolding, and strictly smaller for $3\times3$ kernels. Muon-S serves as the matched spatial-block ablation. The last column shows the polar shape of each exact LMO, omitting signs and scales.}
\label{tab:representation-comparison}
\end{table}

Table~\ref{tab:representation-comparison} summarizes the three geometries, their independently polarized blocks, associated trust norms, distortion factors, and the polar shapes of their exact surrogate directions. Together, Proposition~\ref{prop:exact-lmo}, Theorem~\ref{thm:unified-norm-comparison}, and Corollary~\ref{cor:unified-lmo} characterize Muon-C as an exact Muon oracle for a tractable finite-support surrogate of the convolution-operator geometry, with a worst-case norm comparison that is no weaker than global unfolding and is strictly tighter for \(3 \times 3\) kernels. Proofs are given in Appendix~\ref{app:theory}.

Muon-C does incur additional optimizer-side computation relative to global unfolded Muon because it performs a kernel-axis FFT and multiple channel-block polar operations. This overhead is independent of feature-map resolution but is not included in the model-FLOP accounting used in Section \ref{sec:applied-update-rms}. We therefore separately report measured wall-clock and peak-memory overhead in Section \ref{sec:applied-update-rms}.

%%%%%%%%%%%%%%%%%%%%%%%%%%%%%%%%%%%%%%%%%%%%%%%%%%%%%%%%%%%%%%%%%%%%%%%%%%%%%%%%%%%%%%%
\subsection{Practical Muon-C Algorithm}
\label{sec:practical}

Sections \ref{sec:critical} and \ref{sec:theoretical-properties} define the Muon-C direction and characterize it as an exact linear minimization oracle under the sampled convolution norm in~\eqref{eq:sampled-norm}. We now turn this direction into a practical optimizer through four steps: efficient computation of the frequency-wise polar factors, calibration and routing of the resulting parameter updates, treatment of computational cost and common convolution variants under the critical-grid construction, and assembly of these elements into a complete optimization step. Together, these components preserve the operator-aligned geometry developed above while making Muon-C directly applicable to standard convolutional networks.

The first step is the efficient computation of the Muon-C direction. Muon-C requires no additional optimizer state beyond the momentum already maintained by Muon. Given the momentum $M^t$, we apply the $k_h\times k_w$ orthonormal DFT $\mathcal{F}_{\Omega_k}$ along the two spatial kernel axes, producing the critical-frequency blocks
\[
\widehat M^t = \mathcal{F}_{\Omega_k}(M^t).
\]
The polar factor of each $C_{\rm out}\times C_{\rm in}$ block $\widehat M^t(p,q)$ is then approximated using a small number of Newton-Schulz iterations, and the resulting blocks are transformed back with $\mathcal{F}_{\Omega_k}^{-1}$. Thus, the practical computation directly implements the blockwise direction defined in Section \ref{sec:critical}, with the exact polar map replaced by its Newton-Schulz approximation. Importantly, the number of frequency blocks is $k_hk_w$ and therefore depends only on the kernel support, not on the spatial resolution of the feature map. Appendix \ref{app:implementation} gives the batching, dtype, and complex Newton-Schulz details.

The second step is to convert this direction into the actual parameter update. The oracle characterization in Proposition \ref{prop:exact-lmo} determines the Muon-C direction up to a common positive scale. Under a radius-$\rho$ sampled-norm constraint, the exact Fourier-domain oracle carries the common factor $\rho/\sqrt{n}$, where $n=k_hk_w$. In the practical optimizer, we absorb this trust-region radius and DFT normalization into an optimizer scale $s_R$ and calibrate the realized parameter-space update. This changes the magnitude of the step but not the operator-aligned block geometry established in Sections \ref{sec:critical} and \ref{sec:theoretical-properties}.

For a representation $R$, let $U_R(M^t)$ denote its shaped descent direction. The applied parameter update takes the generic form
\[
W^{t+1} = (1-\eta\lambda)W^t + s_R\eta\, U_R(M^t),
\]
where $\eta$ is the base learning rate and $\lambda$ is the decoupled weight decay. For an eligible convolutional kernel with $n=k_hk_w$, we use
\[
s_g = 0.2\sqrt{\max\{C_{\rm out},\,nC_{\rm in}\}}, \qquad s_f = \frac{0.2}{\operatorname{RMS}(O)+\epsilon},
\]
for global unfolded Muon and Muon-C, respectively, where $O$ denotes the unscaled Muon-C direction in the original kernel layout. The global rule is the matched-RMS scaling used in large-scale Muon training, which multiplies the polar direction of an $A\times B$ matrix by $0.2\sqrt{\max\{A,B\}}$ \citep{liu2025muonscalable}. Because Muon-C returns its direction directly in the kernel layout, $s_f$ instead normalizes the realized kernel update. Under exact hard-polar shaping, both rules target directional RMS $0.2$ before multiplication by the base learning rate, placing the two Muon geometries on the same AdamW-referenced scale.

The same practical step also requires a routing rule specifying which parameters use the convolution-specific geometry. Conv2d kernels with spatial area $k_hk_w>1$ use either Muon-C or global unfolded Muon, while $1\times1$ convolutions, biases, normalization parameters, and all remaining weights use AdamW. Section \ref{sec:experiments} evaluates these optimizers both under controlled applied-update scale and under optimizer-specific learning-rate tuning.

The third step is to examine the practical consequence of the critical-grid construction. In addition to preserving the original finite kernel support, critical sampling keeps the computational cost of Muon-C tied to the kernel rather than the activation resolution. A construction based on the full feature grid would allocate one channel-transfer block per feature frequency, whereas Muon-C uses only one block per critical kernel frequency. For instance, for a $3\times3$ kernel with $C_{\rm in}=C_{\rm out}=256$, the frequency blocks occupy approximately $3$ MiB on the critical kernel grid, compared with approximately $12.4$ GiB on a $224\times224$ feature grid, even before accounting for Newton-Schulz temporaries. Critical sampling thus resolves both the finite-support obstruction in Section \ref{sec:natural-geometry} and the resolution-dependent cost of full-grid polarization. Appendix \ref{app:implementation} gives the full memory calculation.

Within this third step, we also consider how broadly the same construction applies across convolutional layers. For grouped convolution, the transfer matrix is block diagonal across groups at every frequency, and $\|U\|_{\rm conv} = \max_g \|U_g\|_{\rm conv}$.  The trust-region constraint and linear objective therefore separate across groups, so applying Muon-C independently within each group gives the corresponding linear minimization oracle for the complete layer. The norm comparison in Theorem \ref{thm:unified-norm-comparison}  and the continuous-oracle guarantee in Corollary \ref{cor:unified-lmo} apply groupwise without modification. Depthwise convolution is the scalar-block special case of the same construction. The same kernel-grid update can also be applied to a strided convolution because its stored $k_h\times k_w$ kernel support is unchanged, although an operator-norm characterization that explicitly incorporates subsampling remains open.

\begin{algorithm}[t!]
\caption{One operator-aligned Muon-C update on a finite convolutional kernel}
\label{alg:muon-c-step}
\small
\begin{algorithmic}[1]
\REQUIRE Weight $W^t$, gradient $G^t$, momentum $M^{t-1}$, learning rate $\eta$, momentum coefficient $\beta$, weight decay $\lambda$, kernel grid $\Omega_k$, numerical stabilizer $\epsilon$
\ENSURE Updated weight $W^{t+1}$
\STATE $M^t \leftarrow \beta M^{t-1} + (1-\beta)G^t$
\STATE $\widetilde M^t \leftarrow \beta M^t + (1-\beta)G^t$ for Nesterov momentum, or $\widetilde M^t \leftarrow M^t$
\STATE $\widehat M^t \leftarrow \mathcal{F}_{\Omega_k}(\widetilde M^t)$ over spatial kernel axes
\FOR{all critical-frequency blocks $(p,q)\in\Omega_k$}
\STATE $\widehat O^t(p,q) \leftarrow \operatorname{polar}_{\mathrm{NS}} \!\left(\widehat M^t(p,q)\right)$
\ENDFOR
\STATE $O^t \leftarrow \mathcal{F}_{\Omega_k}^{-1}(\widehat O^t)$
\STATE $s_f \leftarrow 0.2/(\operatorname{RMS}(O^t)+\epsilon)$
\STATE $W^{t+1} \leftarrow (1-\eta\lambda)W^t-\eta s_f O^t$
\end{algorithmic}
\end{algorithm}

The fourth and final step is to assemble the preceding components into one practical Muon-C update. Algorithm \ref{alg:muon-c-step} combines momentum formation, the critical-grid Fourier transform, independent frequency-wise polar shaping, the inverse transform, and the calibrated parameter update. We distinguish the stored exponential moving-average momentum $M^t$ from the momentum $\widetilde M^t$ that is actually shaped by Muon-C. With Nesterov momentum, the latter includes the additional look-ahead combination used in the implementation.

The resulting optimizer retains the operator-aligned geometry developed in Section \ref{sec:background} and the critical-grid construction of Sections \ref{sec:natural-geometry} to  \ref{sec:theoretical-properties}, while requiring only kernel-sized Fourier transforms and small $C_{\rm out}\times C_{\rm in}$ polar operations.

%%%%%%%%%%%%%%%%%%%%%%%%%%%%%%%%%%%%%%%%%%%%%%%%%%%%%%%%%%%%%%%%%%%%%%%%%%%%%%%%%%%%%%%
\section{Numerical Experiments}
\label{sec:experiments}

We evaluate Muon-C along three complementary dimensions. First, we compare it with global unfolded Muon under controlled applied-update scale to determine whether critical Fourier geometry improves optimization efficiency independently of update magnitude. Second, we test whether the advantage remains robust across stochastic replication, optimizer-specific tuning, larger data scale, different learning objectives, and convolutional architectures. Third, we use a matched spatial-block ablation to isolate operator alignment from blockwise polarization alone.

%%%%%%%%%%%%%%%%%%%%%%%%%%%%%%%%%%%%%%%%%%%%%%%%%%%%%%%%%%%%%%%%%%%%%%%%%%%%%%%%%%%%%%%
\subsection{Experimental Setup}
\label{sec:exp-setup}

We evaluate Muon-C in both generative learning and discriminative learning settings. Table~\ref{tab:evaluation-suite} summarizes the evaluation suite. For generative modeling, we train the same U-Net architecture with flow matching \citep{ronneberger2015unet,lipman2023flow} on CIFAR-10 and ImageNet-1k resized to $32\times32$. Both experiments use FID-50k as the primary quality metric and a 400k-step training budget. For discriminative learning, we train ResNet-34, ResNet-50 \citep{he2016deep}, and ConvNeXtV2-T \citep{woo2023convnextv2} on ImageNet-100 at $224\times224$ resolution and evaluate full-validation top-1 accuracy.

\begin{table}[t!]
\centering
\small
\begin{tabular}{lllll}
\toprule
Dataset & Task & Model & Primary metric & Budget \\
\midrule
CIFAR-10 & Flow matching & U-Net & FID-50k $\downarrow$ & 400k steps \\
ImageNet-1k-32 & Flow matching & U-Net & FID-50k $\downarrow$ & 400k shared steps \\
ImageNet-100 & Classification & ResNet-34 & Top-1 $\uparrow$ & 100 epochs \\
ImageNet-100 & Classification & ResNet-50 & Top-1 $\uparrow$ & 100 epochs \\
ImageNet-100 & Classification & ConvNeXtV2-T & Top-1 $\uparrow$ & 100 epochs \\
\bottomrule
\end{tabular}
\par
\normalsize
\caption{Evaluation suite across generative learning and discriminative learning. ImageNet-1k is resized to $32 \times 32$ for flow matching, whereas ImageNet-100 classification uses $224 \times 224$ inputs.}
\label{tab:evaluation-suite}
\vspace{-0.1in}
\end{table}

\begin{table}[t!]
\centering
\small
\setlength{\tabcolsep}{4pt}
\begin{tabular}{lccccc}
\toprule
Model & Eligible layers & Conv2d layers & Kernel sizes & Eligible params & Share \\
\midrule
U-Net & 52 & 65 & $3\times3$ & 29.9M / 35.7M & $83.8\%$ \\
ResNet-34 & 33 & 36 & $7\times7$, $3\times3$ & 21.1M / 21.3M & $98.9\%$ \\
ResNet-50 & 17 & 53 & $7\times7$, $3\times3$ & 11.3M / 23.7M & $47.8\%$ \\
ConvNeXtV2-T & 22 & 22 & $7\times7$ dw, $4\times4$, $2\times2$ & 1.9M / 27.9M & $6.7\%$ \\
\bottomrule
\end{tabular}
\par
\normalsize
\caption{Coverage of the convolutional optimizer route across architectures. Conv2d weights with $k_h k_w > 1$ are eligible for Muon-C or global unfolded Muon. All remaining parameters use AdamW.}
\label{tab:routing-coverage}
\vspace{-0.1in}
\end{table}

Table~\ref{tab:routing-coverage} reports how much of each architecture is exposed to the convolutional optimizer route. A parameter is eligible when it is a Conv2d weight with spatial area $k_hk_w>1$. Eligible tensors use either Muon-C or global unfolded Muon, while all remaining parameters use AdamW. The U-Net routes 29.9M of its 35.7M parameters through the convolutional geometry, making it close to a direct test of the proposed geometry. ResNet-34 provides an even higher-exposure classification setting at 98.9\%. ResNet-50 is a mixed case at 47.8\% because its bottleneck design places substantial capacity in $1\times1$ projections. ConvNeXtV2-T provides the lowest-exposure test at 6.7\%. Its eligible weights are primarily depthwise spatial kernels, while most parameters lie in pointwise expansion matrices. Together, these architectures test the proposed geometry when it acts on nearly all, roughly half, or only a small minority of model parameters.

All flow-matching comparisons use the same U-Net, data pipeline, batch size, mixed precision, warmup-constant schedule, 400k-step budget, exponential moving average, and parameter-routing rule. Final generative quality is evaluated using 50k generated samples, 100 Dopri5 sampling steps, and fixed real-data statistics. The primary CIFAR-10 comparison matches the applied-update RMS on the eligible convolutional route using the scaling in Section~\ref{sec:practical}. The robustness studies examine repeated runs at a common learning rate, equal-budget optimizer-specific sweeps, scaling to ImageNet-1k-32, and transfer to ImageNet-100 classification. Appendix~\ref{app:implementation} gives the remaining implementation and evaluation details.

%%%%%%%%%%%%%%%%%%%%%%%%%%%%%%%%%%%%%%%%%%%%%%%%%%%%%%%%%%%%%%%%%%%%%%%%%%%%%%%%%%%%%%%
\subsection{Optimization Efficiency under Controlled Update Scale}
\label{sec:applied-update-rms}

\begin{figure}[t!]
\centering
\includegraphics[width=0.65\linewidth,height=2.5in]{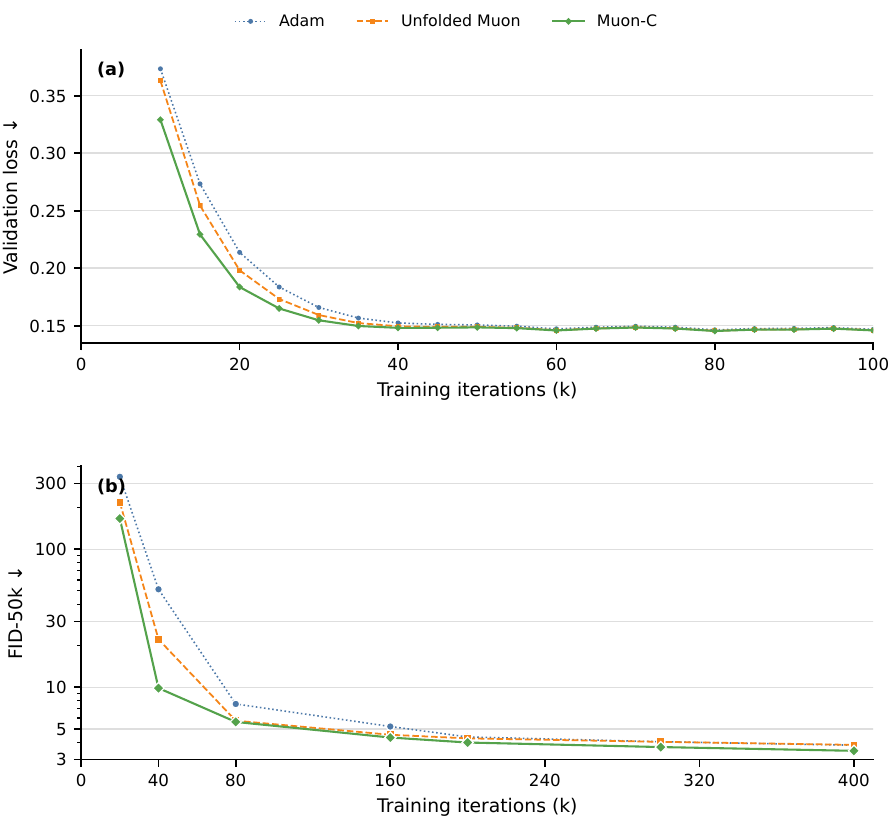}
\caption{CIFAR-10 flow matching under matched applied-update RMS. (a) Validation loss over the first 100k iterations. (b) FID-50k at seven shared checkpoints over the full 400k-step budget. Lower is better.}
\label{fig:matched-update-rms-convergence}
\vspace{-0.2in}
\end{figure}

We first examine whether replacing global patch geometry with critical Fourier geometry improves optimization efficiency. CIFAR-10 provides the primary controlled comparison, where applied-update RMS is explicitly matched between the two Muon geometries. We then test whether the same advantage persists when the training distribution is scaled to ImageNet-1k-32 under a common base learning rate. Finally, we express the CIFAR-10 trajectories in terms of model FLOPs to quantify compute-to-quality efficiency. Figures~\ref{fig:matched-update-rms-convergence}--\ref{fig:matched-rms-fid-flops} report these comparisons.

Figure~\ref{fig:matched-update-rms-convergence} shows the primary controlled comparison on CIFAR-10. At the base learning rate $2\times10^{-4}$, the calibration described in Section~\ref{sec:practical} matches applied-update RMS on the eligible convolutional route. This controls overall parameter-space update scale while changing the matrices on which Muon applies its polar shaping. Muon-C has the lowest validation loss at all 19 evaluations between 10k and 100k iterations, reaching 0.184 at 20k and approaching its plateau near 0.146 by 80k. The separation is more pronounced in sample quality. Muon-C has the lowest FID-50k at all seven shared checkpoints. At 40k iterations, it reaches 9.87 FID, compared with 22.26 for global unfolded Muon and 51.31 for Adam. At the 400k-step endpoint, the corresponding values are 3.47, 3.83, and 3.81. Appendix~\ref{app:empirical-diagnostics} further shows the calibrated Muon-C run has a slightly smaller applied displacement.

\begin{figure}[t!]
\centering
\includegraphics[width=0.65\linewidth,height=2.5in]{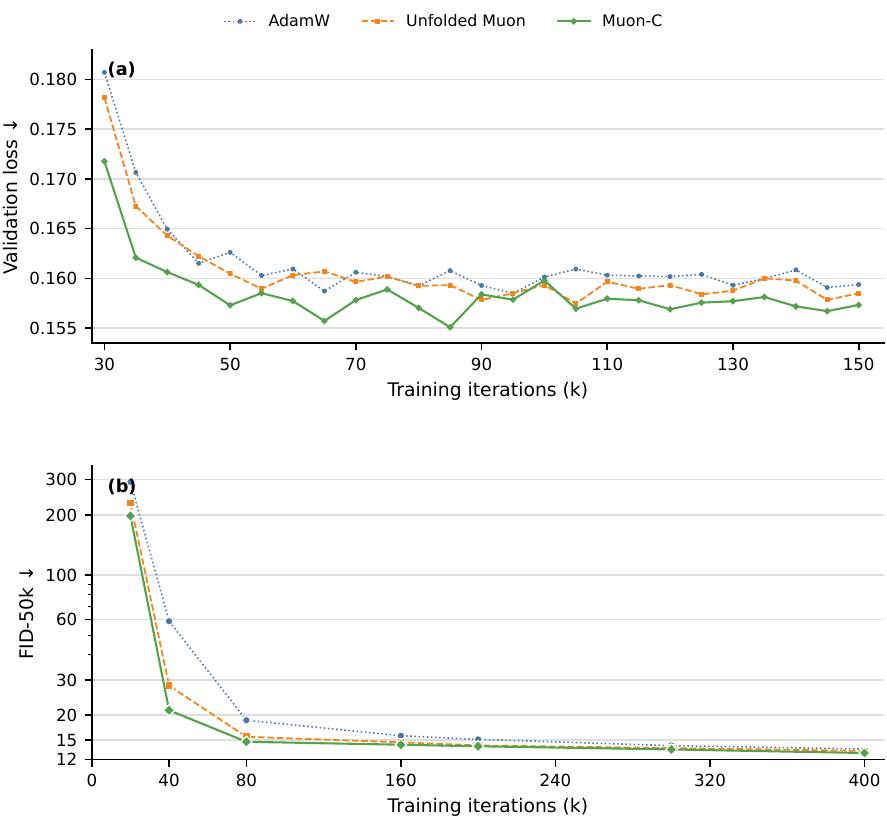}
\caption{ImageNet-1k-32 flow matching under a common base learning rate of $2 \times 10^{-4}$. (a) Validation loss over the first 150k iterations. (b) FID-50k at seven shared checkpoints over the full 400k-step budget. Lower is better.}
\label{fig:imagenet32-convergence}
\vspace{-0.2in}
\end{figure}

Figure~\ref{fig:imagenet32-convergence} tests the same comparison at a substantially larger data scale. The experiment changes the training distribution from CIFAR-10 to ImageNet-1k resized to $32\times32$ while keeping the flow-matching objective, U-Net architecture, parameter routing, and base learning rate $2\times10^{-4}$ fixed. Unlike the primary CIFAR-10 experiment, it uses a common base learning rate rather than explicit RMS calibration. Muon-C has the lowest validation loss at 23 of 25 evaluations between 30k and 150k iterations and the lowest FID-50k at all seven shared checkpoints. Its FID is 21.14 at 40k iterations, compared with 28.02 for global unfolded Muon and 58.88 for AdamW. At 80k iterations, Muon-C reaches 14.71, a quality level that the baselines require approximately 149k and 233k iterations to attain. At 400k iterations, Muon-C reaches 12.93 and remains 0.32 below global unfolded Muon and 0.60 below AdamW. The early and final advantages therefore persist when the training distribution is enlarged by more than an order of magnitude.

\begin{figure}[t!]
\centering
\includegraphics[width=0.6\linewidth,height=2.15in]{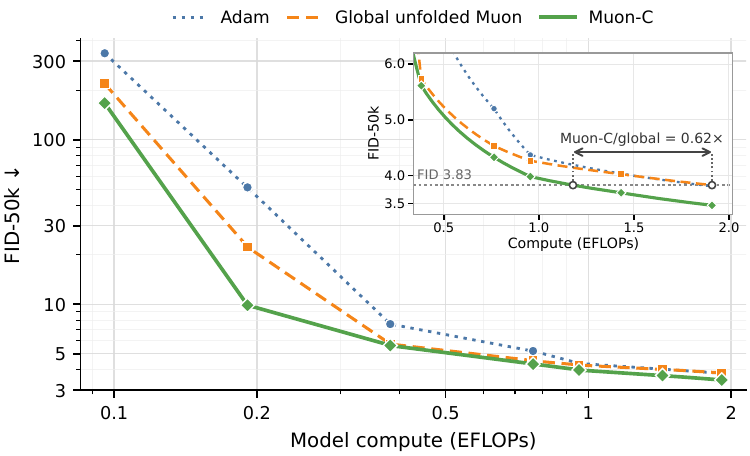}
\caption{CIFAR-10 compute-to-quality efficiency under matched applied-update RMS. FID-50k is plotted against cumulative U-Net forward-backward FLOPs. Lower and further left indicates better sample quality with less model compute.}
\label{fig:matched-rms-fid-flops}
\vspace{-0.2in}
\end{figure}

Figure~\ref{fig:matched-rms-fid-flops} shows that the CIFAR-10 trajectory advantage translates directly into compute-to-quality efficiency. It replots the seven matched-RMS FID checkpoints against cumulative U-Net forward-backward FLOPs, so lower and further left indicates better sample quality at a smaller model-compute budget. The Muon-C curve remains below both baselines throughout the measured range. Log-log interpolation shows that Muon-C reaches the final 400k-step quality of global unfolded Muon and Adam using approximately $0.62\times$ and $0.64\times$ their model FLOPs, corresponding to savings of 38\% and 36\%. Its extra per-step work consists of a kernel-axis transform and $k_hk_w$ polar operations on $C_{\rm out}\times C_{\rm in}$ blocks. As discussed in Section~\ref{sec:practical}, this cost depends on kernel size rather than feature-map resolution.

\begingroup
\begin{table}[t!]
\centering
\setlength{\tabcolsep}{3.5pt}
\resizebox{\linewidth}{!}{
\begin{tabular}{lccccc}
\toprule
Optimizer & Time / step (ms) $\downarrow$ & Relative time & Peak memory (GiB) $\downarrow$ & Steps to FID 4.0 (k) $\downarrow$ & time (h) $\downarrow$ \\
\midrule
Adam & 73.19 & $1.00\times$ & 6.42 & 312.9 & 6.36 \\
Global unfolded Muon & 86.02 & $1.18\times$ & 6.31 & 311.7 & 7.45 \\
\method{} & 99.39 & $1.36\times$ & 6.31 & \textbf{197.5} & \textbf{5.45} \\
\bottomrule
\end{tabular}}
\par
\normalsize
\caption{CIFAR-10 U-Net runtime and time to FID 4.0. Evaluation time is excluded.}
\label{tab:cifar-runtime-efficiency}
\vspace{-0.1in}
\end{table}
\endgroup

Table~\ref{tab:cifar-runtime-efficiency} complements the model-FLOP comparison with runtime. Muon-C takes 35.8\% longer per step than Adam and 15.5\% longer than global unfolded Muon, while requiring substantially fewer steps to reach the common target of FID 4.0. Combining the measured step time with the interpolated target crossing gives a projected training time of 5.45 hours for Muon-C, compared with 6.36 hours for Adam and 7.45 hours for global unfolded Muon, reductions of 14.3\% and 26.8\%, respectively. Its 6.31 GiB peak allocated memory is essentially the same as global unfolded Muon and slightly below Adam in this benchmark.

Together, these comparisons provide the primary evidence for improved optimization efficiency. Muon-C produces a substantially stronger early-training trajectory, and the advantage persists to the final training budget. On CIFAR-10, the controlled-update comparison yields substantial compute-to-quality savings and reduces projected training time to a common FID target. On ImageNet-1k-32, the same ordering persists at a larger data scale under a common learning rate. The empirical ordering is consistent with the tighter worst-case control of the continuous convolution geometry in Corollary~\ref{cor:unified-lmo}. For a $3\times3$ kernel, Muon-C guarantees $9/25$ of the continuous-oracle objective, compared with $1/3$ for global unfolding.

%%%%%%%%%%%%%%%%%%%%%%%%%%%%%%%%%%%%%%%%%%%%%%%%%%%%%%%%%%%%%%%%%%%%%%%%%%%%%%%%%%%%%%%
\subsection{Robustness across Tuning, Data Scale, and Tasks}
\label{sec:robustness}

We next examine the robustness and transfer of the optimization-efficiency gains along three dimensions. We test reproducibility across independent training runs under a common learning rate, compare the optimizers under equal tuning budgets while allowing each to select its own learning rate, and evaluate transfer from flow matching to classification across architectures with substantially different routing coverage. These experiments assess whether the benefit of Muon-C extends beyond a particular random seed, learning-rate choice, data scale, learning objective, or architecture.

%%%%%%%%%%%%%%%%%%%%%%%
\subsubsection{Common-Learning-Rate Replication}

\begin{table}[b!]
\centering
\small
\begin{tabular}{lcc}
\toprule
Optimizer & CIFAR-10 FID $\downarrow$ & ImageNet-1k-32 FID $\downarrow$ \\
\midrule
Adam / \adamw{} & $3.94\pm0.07$ & $13.53\pm0.05$ \\
Global unfolded Muon & $3.85\pm0.08$ & $13.04\pm0.06$ \\
\method{} & $\mathbf{3.60\pm0.07}$ & $\mathbf{12.62\pm0.05}$ \\
\bottomrule
\end{tabular}
\par
\normalsize
\caption{Final flow-matching performance under a common base learning rate of $2 \times 10^{-4}$. Entries report FID-50k. The coordinate-wise baseline is Adam on CIFAR-10 and AdamW on ImageNet-1k-32. Lower is better.}
\label{tab:common-lr-results}
\end{table}

Table~\ref{tab:common-lr-results} shows that the Muon-C advantage is stable across repeated runs under a common base learning rate of $2\times10^{-4}$ and otherwise identical protocols. On CIFAR-10, Muon-C achieves $3.60\pm0.07$ FID, compared with $3.85\pm0.08$ for global unfolded Muon and $3.94\pm0.07$ for Adam. Appendix~\ref{app:empirical-diagnostics} reports the paired results for three training seeds, with Muon-C attaining the best endpoint in every repeat. On ImageNet-1k-32, Muon-C similarly reaches $12.62\pm0.05$ FID, compared with $13.04\pm0.06$ for global unfolded Muon and $13.53\pm0.05$ for AdamW. The ordering is therefore reproducible across stochastic training runs and holds on both flow-matching datasets.

%%%%%%%%%%%%%%%%%%%%%%%
\subsubsection{Equal-Budget Optimizer-Specific Tuning}

\begin{table}[t!]
\centering
\small
\begin{tabular}{lcccc}
\toprule
& \multicolumn{2}{c}{CIFAR-10} & \multicolumn{2}{c}{ImageNet-1k-32} \\
\cmidrule(lr){2-3}\cmidrule(lr){4-5}
Optimizer & Selected LR & FID $\downarrow$ & Selected LR & FID $\downarrow$ \\
\midrule
Adam / \adamw{} & $2\times10^{-4}$ & $3.83\pm0.03$ & $3\times10^{-4}$ & $13.53\pm0.05$ \\
Global unfolded Muon & $8\times10^{-4}$ & $3.54\pm0.04$ & $1\times10^{-3}$ & $13.24\pm0.04$ \\
\method{} & $5\times10^{-4}$ & $\mathbf{3.42\pm0.03}$ & $6\times10^{-4}$ & $\mathbf{12.02\pm0.02}$ \\
\bottomrule
\end{tabular}
\par
\normalsize
\caption{Equal-budget optimizer-specific learning-rate tuning. Each optimizer receives the same ten-run sweep, and the selected learning rate and final FID-50k are reported. The coordinate-wise baseline is Adam on CIFAR-10 and AdamW on ImageNet-1k-32.}
\label{tab:main-results}
\end{table}

Table~\ref{tab:main-results} shows that the advantage remains when every optimizer receives the same tuning budget but may select its own learning rate. For each optimizer, we sweep ten learning rates from $1\times10^{-4}$ to $1\times10^{-3}$ in increments of $1\times10^{-4}$ and select the value with the lowest final FID-50k. Muon-C reaches 3.42 FID on CIFAR-10, compared with 3.54 for global unfolded Muon and 3.83 for Adam. On ImageNet-1k-32, it reaches 12.02, compared with 13.24 for global unfolded Muon and 13.53 for AdamW. Its improvements over global unfolded Muon are therefore 0.12 and 1.22 FID on the two datasets.

The selected learning rates are $5\times10^{-4}$ for Muon-C and $8\times10^{-4}$ for global unfolded Muon on CIFAR-10, and $6\times10^{-4}$ and $1\times10^{-3}$ on ImageNet-1k-32. Appendix~\ref{app:sweep-budget} evaluates the selected runs at $0.1B$ and $0.5B$, where $B=400$k steps. Muon-C has the lowest FID at every reported budget fraction on both datasets, so its advantage appears before the final tuned endpoint.

%%%%%%%%%%%%%%%%%%%%%%%
\subsubsection{Transfer to ImageNet-100 Classification}

\begin{figure}[t!]
\centering
\includegraphics[width=0.65\linewidth,height=2.65in]{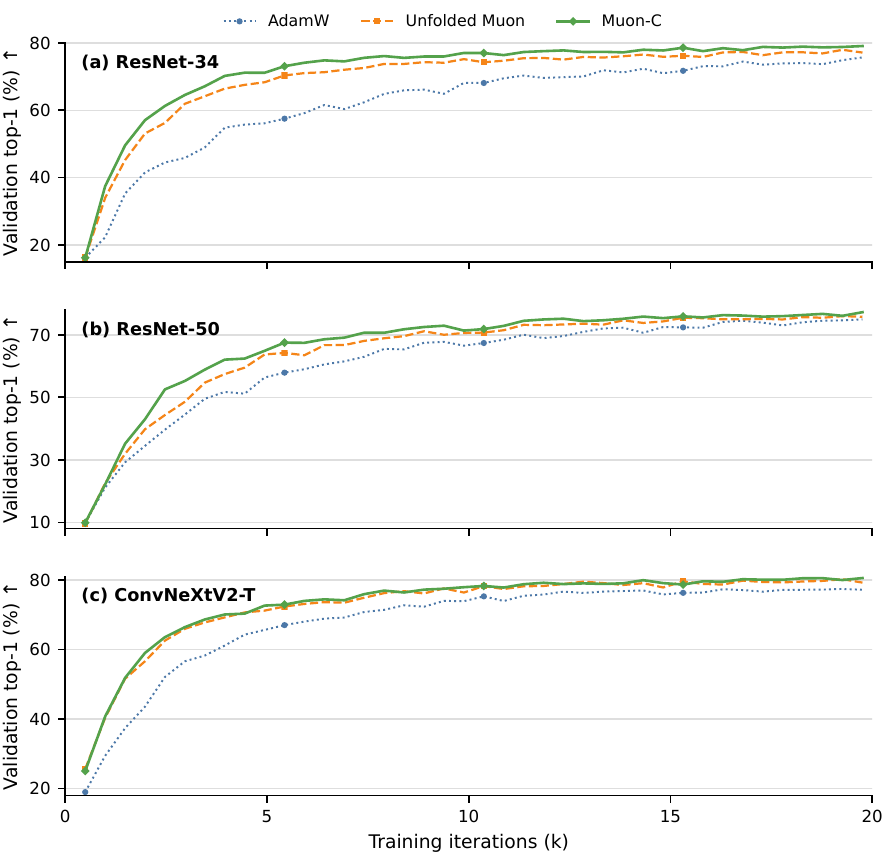}
\caption{Early optimization trajectories on ImageNet-100 classification. Full-validation top-1 accuracy over the first 20k optimizer updates for (a) ResNet-34, (b) ResNet-50, and (c) ConvNeXtV2-T. Higher is better.}
\label{fig:imagenet100-classification-convergence}
\vspace{-0.2in}
\end{figure}

\begin{table}[t!]
\centering
\small
\begin{tabular}{llccc}
\toprule
Setting & Metric & \adamw{} & Unfolded Muon & \method{} \\
\midrule
ResNet-34 & Top-1 $\uparrow$ & $79.74\pm0.16$ & $79.64\pm0.13$ & $\mathbf{80.22\pm0.15}$ \\
ResNet-50 & Top-1 $\uparrow$ & $79.24\pm0.17$ & $79.28\pm0.18$ & $\mathbf{79.88\pm0.16}$ \\
ConvNeXtV2-T & Top-1 $\uparrow$ & $82.86\pm0.09$ & $84.62\pm0.12$ & $\mathbf{84.88\pm0.10}$ \\
\bottomrule
\end{tabular}
\par
\normalsize
\caption{ImageNet-100 full-validation top-1 accuracy after 100 training epochs. Results are reported for architectures with substantially different exposure to the convolutional optimizer route. Higher is better.}
\label{tab:transfer-results}
\end{table}

Figure~\ref{fig:imagenet100-classification-convergence} and Table~\ref{tab:transfer-results} show that the Muon-C advantage transfers from generative flow matching to discriminative classification. We train ResNet-34, ResNet-50, and ConvNeXtV2-T on ImageNet-100 using the same parameter-routing and applied-update-RMS protocol as in the controlled CIFAR-10 comparison. The architectures expose 98.9\%, 47.8\%, and 6.7\% of their parameters to the convolutional optimizer route, respectively. This range provides a controlled test of whether the benefit survives across architectures with very different reliance on spatial convolution.

Figure~\ref{fig:imagenet100-classification-convergence} shows that Muon-C has the highest validation accuracy at every reported evaluation point on all three architectures, with the ordering emerging within the first few thousand updates. At 2k updates, it reaches 57.06\% top-1 accuracy on ResNet-34, compared with 53.10\% for global unfolded Muon and 41.54\% for AdamW. The corresponding values are 43.00\%, 39.80\%, and 34.48\% on ResNet-50, and 59.06\%, 56.66\%, and 43.60\% on ConvNeXtV2-T. The consistent early separation shows the optimization-efficiency advantage is not specific to flow matching.

Table~\ref{tab:transfer-results} shows that the early advantage also persists to the end of training. Muon-C reaches 80.22\% on ResNet-34, improving by 0.58 percentage points over global unfolded Muon and 0.48 points over AdamW. On ResNet-50, it reaches 79.88\%, improving by 0.60 and 0.64 points, respectively. On ConvNeXtV2-T, Muon-C reaches 84.88\%, compared with 84.62\% for global unfolded Muon and 82.86\% for AdamW. The ConvNeXtV2-T result is particularly informative because only 1.9M of its 27.9M parameters use the convolutional route, yet Muon-C still improves over global unfolded Muon. 

Across the three classification architectures, Muon-C shows larger gains early in training and smaller gains at the 100-epoch endpoint. The robustness studies show consistent gains across the evaluated seeds, tuning settings, datasets, and architectures. We next compare frequency-wise and spatial-offset blocks using the matched Muon-S ablation in Section~\ref{sec:spatial-block-ablation}.

%%%%%%%%%%%%%%%%%%%%%%%%%%%%%%%%%%%%%%%%%%%%%%%%%%%%%%%%%%%%%%%%%%%%%%%%%%%%%%%%%%%%%%%
\subsection{Does Operator Alignment Drive the Gain?}
\label{sec:spatial-block-ablation}

\begin{figure}[t!]
\centering
\includegraphics[width=0.65\linewidth,height=2.65in]{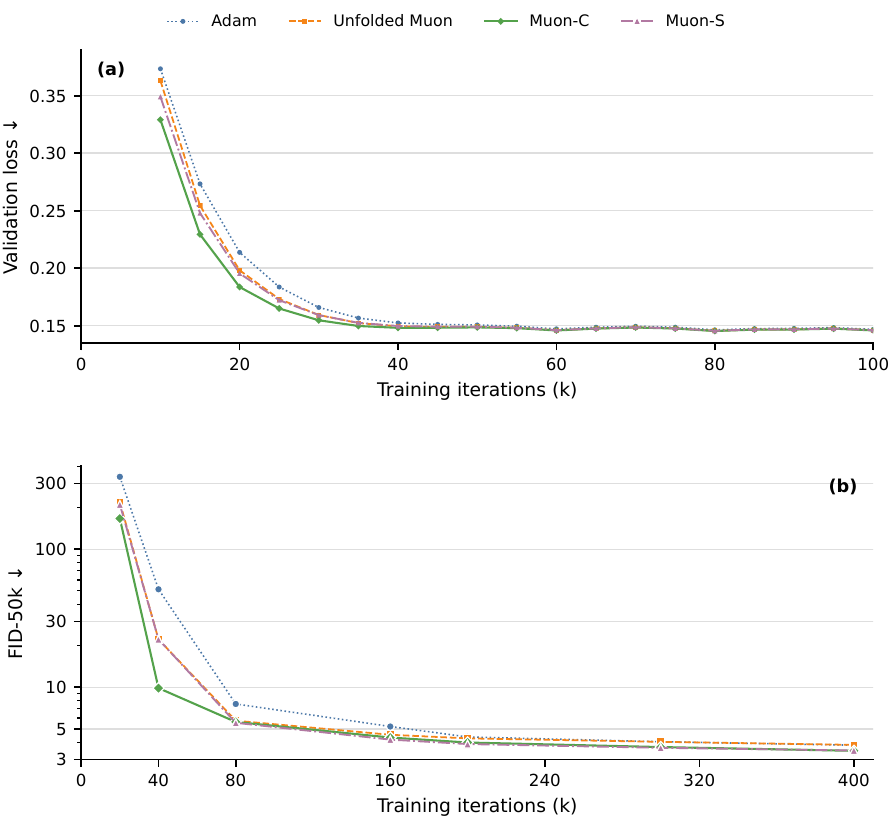}
\caption{CIFAR-10 operator-alignment ablation under matched applied-update RMS. (a) Validation loss over the first 100k iterations. (b) FID-50k at seven shared checkpoints over the full 400k-step budget. Muon-S replaces the kernel DFT with the identity while matching block count, block shape, routing, learning rate, update RMS, and training protocol. Lower is better.}
\label{fig:spatial-block-ablation-convergence}
\vspace{-0.2in}
\end{figure}

The preceding experiments establish that Muon-C improves optimization efficiency across training protocols, data scales, tasks, and architectures. We now isolate the mechanism by separating blockwise channel polarization from operator-aligned translation-frequency organization. Muon-S evaluates the Conv2D spatial-block duality direction of \citet{bernstein2025modular} with our shared momentum, routing, and update-scale settings. Operationally, it replaces the kernel DFT in Muon-C with the identity transform and applies the same channel-matrix polar map independently at each spatial offset. It matches Muon-C in block count, block shape, parameter routing, learning rate, applied-update RMS, and training protocol. Muon-C uses translation-frequency blocks, whereas Muon-S uses spatial-offset blocks.

Figure~\ref{fig:spatial-block-ablation-convergence} shows a clear early optimization advantage from the operator-aligned frequency organization. Muon-C has the lowest validation loss at all 19 evaluations from 10k to 100k iterations. At 20k iterations, its loss is 0.184, compared with 0.195 for Muon-S. The separation is larger in sample quality. At 20k iterations, Muon-C reaches 166.83 FID, compared with 211.25 for Muon-S. At 40k iterations, the values are 9.87 and 22.28. The gap narrows later, but Muon-C remains slightly ahead at the 400k-step endpoint with 3.47 FID compared with 3.49 for Muon-S. Both blockwise variants finish ahead of global unfolded Muon at 3.83 and Adam at 3.81.

Under the matched protocol, frequency-wise blocks yield a stronger early trajectory than spatial-offset blocks. This result supports the role of translation-frequency organization in Muon-C's optimization efficiency.

The ablation comparison also connects directly to the theory. Muon-S and Muon-C are the exact linear minimization oracles for the spatial-block and critically sampled convolution geometries. For a $3\times3$ kernel, Theorem~\ref{thm:unified-norm-comparison} gives worst-case approximation factors of $9$ and $25/9$, while Corollary~\ref{cor:unified-lmo} guarantees $1/9$ and $9/25$ of the continuous-oracle objective.

%%%%%%%%%%%%%%%%%%%%%%%%%%%%%%%%%%%%%%%%%%%%%%%%%%%%%%%%%%%%%%%%%%%%%%%%%%%%%%%%%%%%%%%
\subsection{Directional Curvature and Finite-Support Geometry}
\label{sec:curvature}

The matched Muon-S experiment in Section \ref{sec:spatial-block-ablation} shows that organizing independently polarized channel blocks by translation frequency improves the training trajectory relative to spatial-offset blocks. We next examine how this operator-aligned geometry acts locally within a trained network and why the critical Fourier grid is important for realizing it with finite-support kernels. We study shared states from a Muon-C U-Net trajectory at 40k, 200k, and 400k iterations. At each state, we use one fixed 512-example evaluation objective, construct the three exact-polar directions from the same Nesterov momentum signal, and jointly perturb 49 eligible $3\times3$ convolution kernels. Each direction is scaled per layer to the same convolution-operator-norm budget, estimated on a $32\times32$ Fourier grid, and a common radius multiplier $\alpha$ controls the perturbation size. Appendix~\ref{app:local-geometry-protocol} specifies the budgets and evaluation protocol.

To separate the linear benefit of a direction from its local curvature, we use the quadratic approximation
\begin{equation} \label{eqn:quad-approx}
L(\theta+\alpha V)-L(\theta) \approx -\alpha a+\frac{1}{2}\alpha^2 c_H, \qquad
a=-\langle \nabla L(\theta),V\rangle, \qquad
c_H=V^\top\nabla^2L(\theta)V.
\end{equation}
We estimate $c_H$ and the MSE Gauss-Newton curvature by central finite differences. The resulting local-geometry diagnostics show that Muon-C retains 73\%-89\% of unfolded Muon's linearized decrease while reducing the Hessian and Gauss-Newton curvature estimates to 45\%-60\% and 47\%-51\%, respectively. Its resulting quadratic-model maximum decrease, $a^2/(2c_H)$, is 12\%-38\% larger. Muon-S's Gauss-Newton curvature remains within 4\% of unfolded Muon's, whereas Muon-C's is approximately half of Muon-S's.

\begin{figure}[t!]
\centering
\includegraphics[width=0.9\linewidth,height=2.25in]{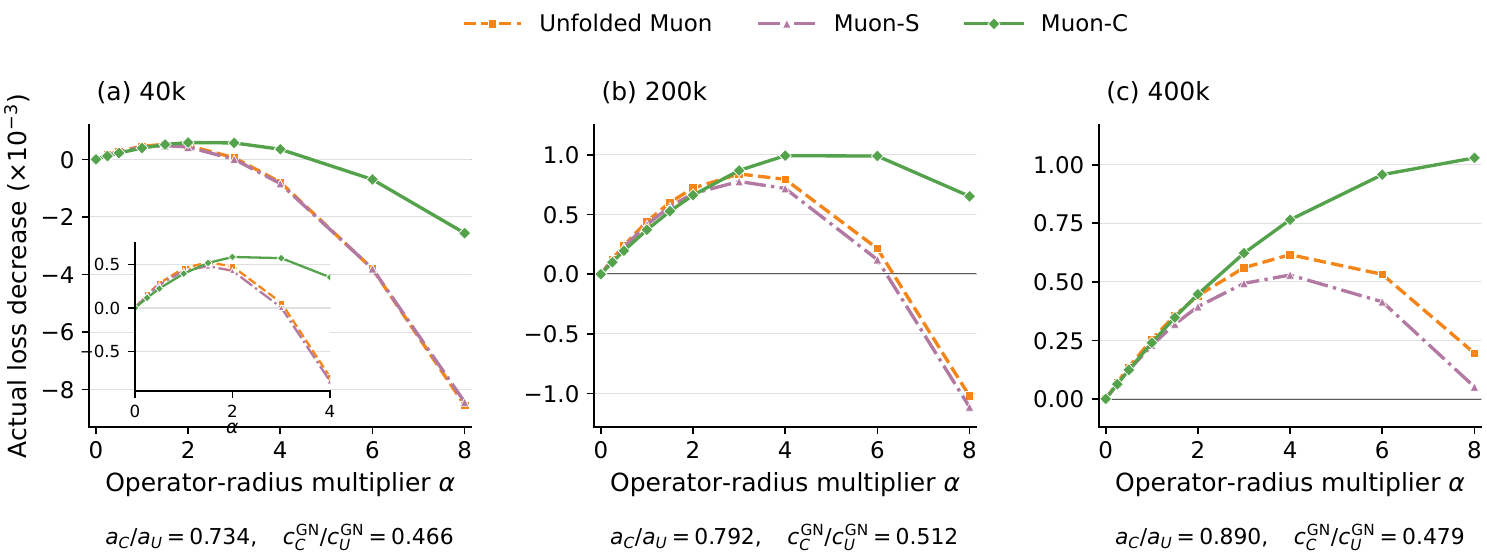}
\caption{Finite-step loss decrease under matched convolution-operator-norm budgets at 40k, 200k, and 400k iterations. Annotations compare Muon-C with unfolded Muon; panels use separate vertical scales. More negative is better.}
\label{fig:alignment-curvature}
\vspace{-0.2in}
\end{figure}

Figure \ref{fig:alignment-curvature} shows the corresponding finite-step behavior under the matched operator-norm budget. Unfolded Muon gives the largest decrease at $\alpha=1$ at all three states, consistent with its larger linearized decrease. Muon-C, however, tolerates a substantially larger useful step. Its best tested decrease occurs at $\alpha=2,4,8$ at 40k, 200k, and 400k iterations, compared with $1.5,3,4$ for both controls, and it improves the best tested decrease over unfolded Muon by 12\%, 18\%, and 67\%, respectively. Thus, the frequency-organized direction sacrifices some first-order decrease but substantially reduces directional curvature, allowing a larger operator-norm step and a greater finite-step improvement.

\begin{table}[b!]
\centering
\small
\resizebox{\linewidth}{!}{
\begin{tabular}{ccccc}
\toprule
Checkpoint & Layers & Supported discrepancy $d_{\mathrm{supp}}$ & Outside-support energy $\ell$ & Full discrepancy $d_{\mathrm{full}}$ \\
\midrule
40k  & 9 & 0.398 / 0.621 / 0.869 & 0.113 / 0.256 / 0.348 & 0.503 / 0.735 / 0.914 \\
200k & 9 & 0.426 / 0.623 / 0.850 & 0.130 / 0.252 / 0.342 & 0.536 / 0.729 / 0.900 \\
400k & 9 & 0.454 / 0.630 / 0.811 & 0.161 / 0.251 / 0.339 & 0.580 / 0.731 / 0.880 \\
\bottomrule
\end{tabular}}
\par
\normalsize
\caption{Full-grid finite-support obstruction across nine representative stride-one layers. Entries report minimum / median / maximum at each checkpoint for supported discrepancy $d_{\mathrm{supp}}$, outside-support energy $\ell$, and full discrepancy $d_{\mathrm{full}}$.}
\label{tab:support-diagnostic}
\end{table}

Table \ref{tab:support-diagnostic} examines a complementary question: whether the same frequency-wise polarization could instead be carried out on the full feature-frequency grid and then restricted back to the stored kernel support. For nine representative stride-one layers spanning $4\times4$ to $32\times32$ feature grids, the full-grid update places substantial energy outside the $3\times3$ stored support at every checkpoint. The median outside-support energy remains approximately 25\%, while the median discrepancy between the critical-grid update and the cropped full-grid update is 0.62--0.63; the corresponding full-update discrepancy is 0.73--0.74. These effects are stable from 40k through 400k iterations. Exact-SVD checks confirm that the full-grid oracle itself is solved accurately. Critical sampling makes independent frequency-wise polarization compatible with the original finite kernel support.

Taken together, Figure \ref{fig:alignment-curvature} and Table \ref{tab:support-diagnostic} clarify two complementary roles of Muon-C's operator-aligned construction. Frequency organization changes the local optimization geometry, producing substantially lower directional curvature and permitting larger useful operator-norm steps, while critical sampling makes this geometry realizable without violating finite kernel support. These diagnostics provide a network-level mechanism consistent with the stronger early training and FID trajectories observed in Sections \ref{sec:applied-update-rms} to \ref{sec:spatial-block-ablation}.

%%%%%%%%%%%%%%%%%%%%%%%%%%%%%%%%%%%%%%%%%%%%%%%%%%%%%%%%%%%%%%%%%%%%%%%%%%%%%%%%%%%%%%%
\section{Discussion}
\label{sec:discussion}

Muon is inherently representation dependent because its spectral-norm geometry is determined by the matrices, and more generally the independently constrained blocks, on which the polar update acts. The central lesson of this work is therefore broader than one implementation for convolution. Extending Muon to a structured parameter requires choosing not only a coordinate representation but also the linear maps that define its independently polarized blocks. A unitary change of coordinates alone cannot alter the global Muon direction, while a genuinely different geometry arises from changing the block partition. For convolution, translation equivariance provides a principled choice through the frequency-wise channel-transfer maps. Muon-C realizes this operator-aligned geometry on a critical Fourier grid, allowing these maps to be polarized independently while returning the update exactly to the original finite kernel support.

The theoretical and empirical results provide complementary support for this construction. Muon-C is the exact linear minimization oracle for the critically sampled convolution norm, and the critical DFT gives a unitary, bijective, and support-preserving representation of the stored coefficients. Relative to the continuous convolution-operator norm, its worst-case guarantee is no weaker than global unfolding for any finite kernel size and is strictly stronger for $3\times3$ kernels. The experiments show that this geometric change translates into improved optimization efficiency. Muon-C produces stronger early trajectories and compute-to-quality performance under controlled update scale, and the advantage persists across stochastic replication, optimizer-specific tuning, larger data scale, discriminative training, and architectures with widely different routing coverage. The matched Muon-S comparison supports the contribution of translation-frequency organization to the early trajectory gains.

Several questions remain open. On the theoretical side, establishing conditions under which improved operator-oracle alignment translates into faster convergence remains an important next step. The reduced directional curvature observed in Section \ref{sec:curvature} suggests one possible mechanism: operator-aligned first-order geometry may interact with local loss curvature to permit larger useful steps. Analyzing the finite-iteration Newton-Schulz approximation and extending the operator-norm characterization to strided and dilated convolution are additional directions for future work. On the empirical side, Muon-C introduces additional optimizer-side computation despite its resolution-independent critical-grid construction. The strongest empirical evidence is also concentrated in relatively small-resolution generative models and ImageNet-100 classification, rather than full-scale modern convolutional training. Broader wall-clock evaluations, together with extensions to full-resolution ImageNet-1k, larger kernels, and additional convolution-heavy tasks, would further test its practical scope. More broadly, the operator-aligned viewpoint suggests a route for extending Muon to other structured parameters by identifying symmetry-induced linear maps and defining optimizer geometry on those maps.

%%%%%%%%%%%%%%%%%%%%%%%%%%%%%%%%%%%%%%%%%%%%%%%%%%%%%%%%%%%%%%%%%%%%%%%%%%%%%%%%%%%%%%%
\appendix

%%%%%%%%%%%%%%%%%%%%%%%%%%%%%%%%%%%%%%%%%%%%%%%%%%%%%%%%%%%%%%%%%%%%%%%%%%%%%%%%%%%%%%%
\section{Proofs and Operator-Norm Analysis of Muon-C}
\label{app:theory}

All frequency-domain pairings use $\inner{A}{B}_{F}=\operatorname{tr}(A^*B)$ with the real part understood. For real spatial tensors, conjugate symmetry is preserved because $\operatorname{polar}(\overline A)=\overline{\operatorname{polar}(A)}$. Paired choices therefore invert to real tensors. At zero or rank-deficient blocks, we use the canonical partial polar factor and omit arbitrary null-space components.

%%%%%%%%%%%%%%%%%%%%%%%
\begin{proof}(Proposition~\ref{prop:polar-unitary-equivariance})
If $X=P\Sigma Q^*$ is a compact singular value decomposition, then $LXR=(LP)\Sigma(R^*Q)^*$ is also one, and hence $\operatorname{polar}(LXR)=(LP)(R^*Q)^*=L(PQ^*)R$. The zero case follows from $\operatorname{polar}(0)=0$. This completes the proof of Proposition~\ref{prop:polar-unitary-equivariance}.
\end{proof}

%%%%%%%%%%%%%%%%%%%%%%%
\begin{proof}(Proposition~\ref{prop:exact-lmo})
Equation~\eqref{eq:sample-kfb-relation} makes the constraint equivalent to $\opnorm{\widehat U(p,q)}\leq\rho/\sqrt n$ at every critical frequency. Parseval's identity gives
\[
\inner{M}{U}=\sum_{p,q}\mathrm{Re}\,\inner{\widehat M(p,q)}{\widehat U(p,q)}_{F},
\]
so the objective and constraint separate by block. Spectral--nuclear norm duality yields $\widehat U_{\mathrm f}^\star(p,q)=-(\rho/\sqrt n)\operatorname{polar}(\widehat M(p,q))$. For a real $M$, conjugate symmetry and polar conjugation make the inverse DFT real. This completes the proof of Proposition~\ref{prop:exact-lmo}.
\end{proof}

%%%%%%%%%%%%%%%%%%%%%%%
\emph{Full-grid circular linear minimization oracle.} The idealized full-grid reference allows one free coefficient matrix at every location of an $H\times W$ circular grid. Let $G,U\in\R^{C_{\mathrm{out}}\times C_{\mathrm{in}}\times H\times W}$, let $\Omega=[H]\times[W]$, and let $\widehat G=\cF_\Omega(G)$ and $\widehat U=\cF_\Omega(U)$ denote their orthonormal spatial DFTs. Define
\[
\|U\|_{\mathrm{block},\Omega} :=\max_{\omega\in\Omega}\opnorm{\widehat U(\omega)}.
\]
This norm is proportional to the induced $\ell_2$ norm of the corresponding circular-convolution operator. The proportionality constant rescales the trust-region radius but does not change the steepest direction.

\begin{proposition}[Idealized full-grid Fourier-block LMO]
\label{prop:block-lmo}
For full-grid momentum $G$, one solution of
\[
\min_{\|U\|_{\mathrm{block},\Omega}\leq\rho}\inner{G}{U}
\]
is specified by
\[
\widehat U^\star(\omega)
=-\rho\,\operatorname{polar}\bigl(\widehat G(\omega)\bigr),
\qquad \omega\in\Omega.
\]
\end{proposition}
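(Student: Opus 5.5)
The argument follows the proof of Proposition~\ref{prop:exact-lmo}, with the critical kernel grid replaced by the full $H\times W$ circular grid $\Omega$.

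\emph{Step 1 (separation).} Because $\cF_\Omega$ is unitary, Parseval's identity gives
\[
\inner{G}{U}=\sum_{\omega\in\Omega}\mathrm{Re}\,\inner{\widehat G(\omega)}{\widehat U(\omega)}_F .
\]
By definition, the constraint $\|U\|_{\mathrm{block},\Omega}\le\rho$ is the same as requiring $\opnorm{\widehat U(\omega)}\le\rho$ for every $\omega\in\Omega$. This reduces the problem to a family of independent blockwise problems, provided we first settle which $\widehat U$ are admissible.

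\emph{Step 2 (admissibility).} Here we expect the main care to be needed. The variable $U$ is real, so the blocks $\widehat U(\omega)$ are not free. They must satisfy conjugate symmetry, $\widehat U(-\omega)=\overline{\widehat U(\omega)}$, with indices taken mod $(H,W)$.

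To handle this, we first relax the problem and allow arbitrary complex $U$. The relaxed minimum is at most the real minimum. If the relaxed minimizer turns out to be real, it is therefore optimal for the original problem. This is the standard argument, already used implicitly in Proposition~\ref{prop:exact-lmo}.

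\emph{Step 3 (blockwise solution).} For each block we solve
\[
\min_{\opnorm{V}\le\rho}\mathrm{Re}\,\mathrm{tr}\bigl(\widehat G(\omega)^*V\bigr).
\]
Trace duality gives $|\mathrm{Re}\,\mathrm{tr}(A^*V)|\le\|A\|_*\,\opnorm{V}$, where $\|\cdot\|_*$ is the nuclear norm. So each block objective is at least $-\rho\|\widehat G(\omega)\|_*$.

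This lower bound is attained at $V=-\rho\,\operatorname{polar}(\widehat G(\omega))$. Writing $\widehat G(\omega)=P\Sigma Q^*$, we get
\[
\mathrm{tr}\bigl(Q\Sigma P^*\,PQ^*\bigr)=\mathrm{tr}\,\Sigma=\|\widehat G(\omega)\|_*,
\]
and $\opnorm{PQ^*}\le1$. At zero or rank-deficient blocks, the canonical partial polar factor still attains the bound. Any completion on the null space would also work, which is the source of non-uniqueness.

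\emph{Step 4 (realness).} Since $G$ is real, $\widehat G(-\omega)=\overline{\widehat G(\omega)}$. The canonical polar factor satisfies $\operatorname{polar}(\overline A)=\overline{\operatorname{polar}(A)}$, as noted at the start of this appendix. The blockwise minimizers therefore inherit conjugate symmetry, so $U^\star=\cF_\Omega^{-1}(\widehat U^\star)$ is real.

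Self-conjugate frequencies ($\omega=-\omega$ mod the grid) are the one place to check. At such a frequency $\widehat G(\omega)$ is a real matrix, and its canonical polar factor is real, so there is no conflict. By Step 2, $U^\star$ is then optimal for the original real problem. A remark noting that $\|\cdot\|_{\mathrm{block},\Omega}$ equals $1/\sqrt{HW}$ times the circular-convolution operator norm (via the convention in \eqref{eq:sample-kfb-relation}) only rescales $\rho$, so it does not affect the direction.
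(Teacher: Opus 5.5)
Your proposal is correct and follows the paper's proof. Both use Parseval's identity to separate the objective across frequencies, spectral--nuclear norm duality to solve each block, and conjugate symmetry of the canonical polar factor to obtain a real update. Your explicit complex-relaxation argument in Step 2 spells out what the paper leaves implicit in its one-line remark that conjugate-symmetric choices invert to a real spatial tensor.
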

\begin{proof}
Parseval's identity separates the objective as
\[
\inner{G}{U}=\sum_{\omega\in\Omega}\mathrm{Re}\,\inner{\widehat G(\omega)}{\widehat U(\omega)}_F.
\]
The constraint bounds each $\opnorm{\widehat U(\omega)}$ by $\rho$, so spectral--nuclear norm duality gives $\widehat U^\star(\omega)=-\rho\operatorname{polar}(\widehat G(\omega))$ independently at every frequency. Conjugate-symmetric choices invert to a real spatial tensor. This completes the proof of Proposition \ref{prop:block-lmo}.
\end{proof}

%%%%%%%%%%%%%%%%%%%%%%%
\begin{proof}(Theorem~\ref{thm:unified-norm-comparison})
Write $j=(p,q)$ and $A_j=A_U(\theta_p,\phi_q)$. Since
\[
B_UB_U^*=\sum_{u,v}U_{uv}U_{uv}^*\succeq U_{uv}U_{uv}^*,
\]
each $\opnorm{U_{uv}}\leq\patchnorm{U}$, proving $\spnorm{U}\leq\patchnorm{U}$. Discrete Fourier orthogonality gives
\begin{equation*}
\frac1n\sum_jA_jA_j^*=\sum_{u,v}U_{uv}U_{uv}^*=B_UB_U^*.
\end{equation*}
Therefore
\[
\patchnorm{U}^2
\leq\frac1n\sum_j\lambda_{\max}(A_jA_j^*)
\leq\samplenorm{U}^2.
\]
The critical frequencies lie in the continuous domain. Hence $\samplenorm{U}\leq\convnorm{U}$, completing \eqref{eq:norm-hierarchy}.

For the spatial upper bound, the triangle inequality gives
\[
\opnorm{A_U(\theta)}\leq\sum_{u,v}\opnorm{U_{uv}}\leq n\,\spnorm{U}.
\]
Let $Z_\theta$ vertically stack $e^{-\mathrm{i}\langle(u,v),\theta\rangle}I$ over $(u,v)\in\Omega_k$. Then $A_U(\theta)=B_UZ_\theta$ and $Z_\theta^*Z_\theta=nI$, so $\opnorm{A_U(\theta)}\leq\sqrt n\,\patchnorm{U}$.

Cardinal interpolation gives
\[
A_U(\theta_1,\theta_2)=\sum_{p,q}\ell_p^{(k_h)}(\theta_1)\ell_q^{(k_w)}(\theta_2)A_j.
\]
Taking operator norms and then the supremum yields $\convnorm{U}\leq\Lambda_{k_h}\Lambda_{k_w}\samplenorm{U}$, proving \eqref{eq:unified-upper-bounds}. Fourier orthogonality also gives $\sum_p|\ell_p^{(k)}(\theta)|^2=1$. Cauchy--Schwarz therefore gives $\Lambda_k\leq\sqrt k$, proving \eqref{eq:lebesgue-sqrt-bound}. For $k=3$,
\[
|\ell_p^{(3)}(\theta)|=\frac13\left|1+2\cos\!\left(\theta-\frac{2\pi p}{3}\right)\right|.
\]
The three expressions inside the absolute values sum to $3$, at most one is negative, and each is at least $-1$. Their absolute values therefore sum to at most $5$, with equality at $\theta=\pi$, so $\Lambda_3=5/3$. This completes the proof of Theorem~\ref{thm:unified-norm-comparison}.
\end{proof}

%%%%%%%%%%%%%%%%%%%%%%%
\begin{proof}(Corollary~\ref{cor:unified-lmo})
Spectral--nuclear norm duality gives the global and spatial oracles, and Proposition~\ref{prop:exact-lmo} gives the Fourier oracle. Let $\|\cdot\|_r$ be the corresponding surrogate norm and define
\[
\Delta_r=\max_{\|U\|_r\leq\rho}-\inner{M}{U},
\qquad
\Delta_{\mathrm{conv}}=\max_{\convnorm{U}\leq\rho}-\inner{M}{U}.
\]
The theorem gives $\|U\|_r\leq\convnorm{U}\leq\kappa_r\|U\|_r$. Thus the continuous ball is contained in the surrogate ball and $\Delta_r\geq\Delta_{\mathrm{conv}}$, while $\convnorm{U_r^\star/\kappa_r}\leq\|U_r^\star\|_r\leq\rho$. Consequently,
\[
-\inner{M}{U_r^\star/\kappa_r}
=\frac{\Delta_r}{\kappa_r}
\geq\frac{\Delta_{\mathrm{conv}}}{\kappa_r}, 
\]
which proves  \eqref{eq:unified-lmo-factor}. This completes the proof of Corollary~\ref{cor:unified-lmo}.
\end{proof}

%%%%%%%%%%%%%%%%%%%%%%%%%%%%%%%%%%%%%%%%%%%%%%%%%%%%%%%%%%%%%%%%%%%%%%%%%%%%%%%%%%%%%%%
\section{Implementation and Experimental Details}
\label{app:implementation}

%%%%%%%%%%%%%%%%%%%%%%%
\emph{Muon-C implementation and routing.} Muon-C applies a two-dimensional orthonormal FFT on the kernel-sized spatial grid and uses five float32 complex Newton--Schulz iterations to approximate the hard-polar direction. We use momentum 0.95 with Nesterov momentum and no support-constraint iteration. Conv2d kernels with spatial area $k_hk_w>1$ are routed through Muon-C, while $1\times1$ convolutions, biases, normalization parameters, and residual matrices use AdamW. Parameters with matching shapes and settings are batched. The global unfolded-Muon baseline uses the same routing boundary but applies matrix Muon to each eligible kernel's mode-0 unfolding.
For a convolution with $g$ groups, Muon-C independently polarizes each group's $(C_{\mathrm{out}}/g)\times(C_{\mathrm{in}}/g)$ transfer matrix at each frequency. Thus, ConvNeXtV2-T's depthwise convolutions use one scalar block per channel and frequency, without mixing channels.

For each frequency block $B$, let $\widetilde B=B^*$ if $B$ has more rows than columns, and $\widetilde B=B$ otherwise. The complex Newton--Schulz routine uses
\[
X_0=\frac{\widetilde B}{\|\widetilde B\|_F+\epsilon},
\qquad A_j=X_jX_j^*,
\qquad X_{j+1}=aX_j+(bA_j+cA_j^2)X_j,
\]
with $(a,b,c)=(3.4445,-4.775,2.0315)$, $\epsilon=10^{-8}$, and $j=0,\ldots,4$. It returns $X_5^*$ for a transposed input and $X_5$ otherwise. Computation uses complex64; the same $\epsilon$ stabilizes the update-RMS normalization in Algorithm~\ref{alg:muon-c-step}.

%%%%%%%%%%%%%%%%%%%%%%%
\emph{Training configurations.} The CIFAR-10 U-Net uses $32\times32$ images, batch size 128, mixed precision, 400k training steps, a 5k-step warmup followed by a constant learning rate, gradient clipping at 1.0, and EMA decay 0.9999. ImageNet-1k-32 uses the same U-Net architecture, batch size, precision, warmup-constant schedule, 400k-step budget, EMA protocol, data pipeline, and parameter-routing rule. ImageNet-100 uses $224\times224$ inputs and trains ResNet-34, ResNet-50, and ConvNeXtV2-T for 100 epochs, reporting full-validation top-1 accuracy under the same routing and applied-update-scale protocol.
All ImageNet-100 runs use batch size 256 and a 1k-step linear warmup followed by cosine decay to zero. The base learning rate and weight decay are $(2\times10^{-4},0)$ for both ResNets and $(2\times10^{-3},0.05)$ for ConvNeXtV2-T. Training uses random resized crops with area scale $[0.08,1]$ and horizontal flips with probability 0.5. Validation resizes the shorter side to 256 pixels and center-crops to 224 pixels. Both splits use ImageNet channel normalization. Figure~\ref{fig:imagenet100-classification-convergence} shows one training run per method and architecture.
Table~\ref{tab:transfer-results} reports the mean and standard deviation of three independent runs with seeds 42--44.

%%%%%%%%%%%%%%%%%%%%%%%
\emph{Learning-rate selection and replication.} Common-learning-rate flow-matching runs use a base learning rate of $2\times10^{-4}$. CIFAR-10 repetitions use paired training seeds 42--44. Each equal-budget sweep evaluates ten learning rates from $10^{-4}$ to $10^{-3}$ in increments of $10^{-4}$ and selects the value with the lowest final FID-50k. The selected learning rates for Adam, global unfolded Muon, and Muon-C are $(2,8,5)\times10^{-4}$ on CIFAR-10 and $(3,10,6)\times10^{-4}$ on ImageNet-1k-32. Weight decay is zero in the CIFAR experiments, so AdamW on the hybrid fallback route is equivalent to the pure Adam baseline there.

% %%%%%%%%%%%%%%%%%%%%%%%
% \paragraph{Applied-update RMS calibration.}
% To isolate optimizer geometry from update magnitude in the primary CIFAR-10 comparison, we calibrate Muon-C to the applied parameter-space RMS of global unfolded Muon on the eligible spatial-convolution route $\mathcal{P}$. We define the actual post-step displacement by
% \[
% \operatorname{RMS}(\Delta\mathcal{P}) = \left( \frac{\sum_{W\in\mathcal{P}}\|\Delta W\|_F^2} {\sum_{W\in\mathcal{P}}|W|} \right)^{1/2}.
% \]
% A five-step controlled probe shares initialization, mini-batches, and stochastic draws across methods. The fixed calibration is determined from steps 2--5 after the first warmup-scale step. At learning rate $2\times10^{-4}$, the probe measures uncalibrated route RMS of $5.56\times10^{-6}$ for Muon-C and $1.62\times10^{-6}$ for global unfolded Muon. The matched-update runs therefore apply one fixed multiplier of 0.291 to the Muon-C route. After calibration, Muon-C has mean route RMS $1.42\times10^{-6}$ compared with $1.60\times10^{-6}$ for global unfolded Muon, a ratio of 0.888. Thus the controlled comparison is conservative with respect to update magnitude, with Muon-C taking the smaller applied displacement. Table \ref{tab:applied-rms-audit} gives the step-level and layer-level audit.

%%%%%%%%%%%%%%%%%%%%%%%
\emph{Evaluation and model-compute accounting.} Every FID-50k evaluation uses 50k generated samples, 100 Dopri5 sampling steps, fixed real-data statistics, and EMA weights. The shared checkpoints are 20k, 40k, 80k, 160k, 200k, 300k, and 400k iterations. The Muon-S control uses the same evaluation protocol. Figure \ref{fig:matched-rms-fid-flops} reports cumulative U-Net forward--backward model FLOPs, using $4.78\times10^{12}$ FLOPs per batch; optimizer, evaluation, and data-pipeline computation are excluded. Runtime measurements in Table~\ref{tab:cifar-runtime-efficiency} use one NVIDIA RTX PRO 6000 Blackwell Max-Q Workstation Edition GPU. Each optimizer runs in a separate process with 50 warmup steps followed by 500 measured steps at batch size 128. We report mean step time, with CUDA synchronization at both timing boundaries, including data loading and transfer, forward/backward computation, gradient clipping, optimizer and scheduler steps, gradient clearing, and EMA updates. Peak-memory statistics are reset after warmup; we report maximum allocated memory over the timed window. Evaluation is excluded.

%%%%%%%%%%%%%%%%%%%%%%%
\emph{Computational and memory cost.} A real-input FFT on an $a\times b$ grid stores $a(\lfloor b/2\rfloor+1)$ complex channel-transfer blocks. For $C_{\mathrm{in}}=C_{\mathrm{out}}=256$, a $3\times3$ critical grid has six blocks and occupies approximately 3 MiB in complex64. Feature grids of size 8, 16, and 32 use 40, 144, and 544 blocks, respectively; the $32\times32$ tensor occupies approximately 272 MiB and the $224\times224$ tensor approximately 12.4 GiB, before Newton-Schulz temporaries. These values are storage estimates for the frequency-block tensors rather than end-to-end training memory; measured peak allocated memory is reported in Table \ref{tab:cifar-runtime-efficiency}.

%%%%%%%%%%%%%%%%%%%%%%%%%%%%%%%%%%%%%%%%%%%%%%%%%%%%%%%%%%%%%%%%%%%%%%%%%%%%%%%%%%%%%%%
\section{Additional Empirical Results and Diagnostics}
\label{app:empirical-diagnostics}

%%%%%%%%%%%%%%%%%%%%%%%%%%%%%%%%%%%%%%%%%%%%%%%%%%%%%%%%%%%%%%%%%%%%%%%%%%%%%%%%%%%%%%%
\subsection{Applied-Update RMS Diagnostics}

Table~\ref{tab:applied-rms-audit} reports the actual parameter displacements under the AdamW-referenced RMS scaling described in Section~\ref{sec:practical}. The probe uses shared initialization, mini-batches, and stochastic draws and measures post-step RMS on the eligible spatial-Conv2d route.

\begin{table}[b!]
\centering
\small
\begin{tabular}{lccc}
\toprule
Measurement & \globalmuon{} & \method{} & Ratio \\
\midrule
\multicolumn{4}{l}{\textit{Global route RMS ($\times 10^{-6}$)}} \\
Step 2 & 1.301 & 1.214 & 0.934 \\
Step 3 & 1.599 & 1.418 & 0.887 \\
Step 4 & 1.706 & 1.494 & 0.875 \\
Step 5 & 1.779 & 1.547 & 0.869 \\
Mean   & 1.596 & 1.418 & 0.888 \\
\midrule
\multicolumn{4}{l}{\textit{Layerwise RMS-ratio summary over 52 layers}} \\
Mean of layer ratios   & \multicolumn{2}{c}{---} & 0.733 \\
Median layer ratio     & \multicolumn{2}{c}{---} & 0.744 \\
Range                  & \multicolumn{2}{c}{---} & [0.137, 1.100] \\
Layers with ratio $\leq 1$ & \multicolumn{2}{c}{---} & 43/52 \\
\bottomrule
\end{tabular}
\par
\normalsize
\caption{Applied-update RMS on the eligible spatial-Conv2d route. Ratios are Muon-C divided by global unfolded Muon; layer summaries average available nonzero ratios over steps 2 to 5.}
\label{tab:applied-rms-audit}
\end{table}

Muon-C has a smaller global route RMS at every measured step, with a mean ratio of 0.888 relative to global unfolded Muon. The same pattern holds across layers: 43 of 52 layers have mean RMS ratios no greater than one, and the median layerwise ratio is 0.744.

%%%%%%%%%%%%%%%%%%%%%%%%%%%%%%%%%%%%%%%%%%%%%%%%%%%%%%%%%%%%%%%%%%%%%%%%%%%%%%%%%%%%%%%
\subsection{Robustness across Seeds and Training Budgets}
\label{app:sweep-budget}

%%%%%%%%%%%%%%%%%%%%%%%
\subsubsection{Replication across Training Seeds}
Table~\ref{tab:cifar-common-lr-seeds} resolves the common-learning-rate CIFAR-10 aggregate in Table~\ref{tab:common-lr-results} into three paired training runs. The training seed is shared across methods within each column, and all runs use base learning rate $2\times10^{-4}$ and the same final FID-50k protocol.

\begin{table}[t!]
\centering
\small
\begin{tabular}{lcccc}
\toprule
Optimizer & Seed 42 & Seed 43 & Seed 44 & Mean $\pm$ std. \\
\midrule
Adam & 3.999 & 3.953 & 3.869 & $3.941\pm0.066$ \\
\globalmuon{} & 3.908 & 3.894 & 3.761 & $3.854\pm0.081$ \\
\method{} & \textbf{3.588} & \textbf{3.540} & \textbf{3.681} & $\mathbf{3.603\pm0.072}$ \\
\bottomrule
\end{tabular}
\par
\normalsize
\caption{CIFAR-10 final FID-50k across three paired common-learning-rate training seeds. The final column reports the mean and sample standard deviation. Lower is better.}
\label{tab:cifar-common-lr-seeds}
\end{table}

Muon-C achieves the best endpoint in every paired seed. Averaged across the three runs, it improves FID by 0.252 over global unfolded Muon and by 0.338 over Adam, with its seedwise gain over global unfolded Muon ranging from 0.080 to 0.354.

%%%%%%%%%%%%%%%%%%%%%%%
\subsubsection{Performance across Training Budgets}

\begin{table}[t!]
\centering
\resizebox{0.9\linewidth}{!}{
\begin{tabular}{lcccc}
\toprule
Optimizer & Selected LR & FID at $0.1B$ $\downarrow$ & FID at $0.5B$ $\downarrow$ & FID at $B$ $\downarrow$ \\
\midrule
\multicolumn{5}{l}{\textit{CIFAR-10}} \\
Adam & $2\times10^{-4}$ & $51.37\pm0.25$ & $4.41\pm0.04$ & $3.83\pm0.03$ \\
Global unfolded Muon & $8\times10^{-4}$ & $24.97\pm0.05$ & $3.89\pm0.07$ & $3.54\pm0.04$ \\
\method{} & $5\times10^{-4}$ & $\mathbf{10.37\pm0.06}$ & $\mathbf{3.85\pm0.06}$ & $\mathbf{3.42\pm0.03}$ \\
\addlinespace
\multicolumn{5}{l}{\textit{ImageNet-1k-32}} \\
\adamw{} & $3\times10^{-4}$ & $58.87\pm0.06$ & $15.11\pm0.03$ & $13.53\pm0.05$ \\
Global unfolded Muon & $1\times10^{-3}$ & $28.02\pm0.06$ & $14.10\pm0.04$ & $13.24\pm0.04$ \\
\method{} & $6\times10^{-4}$ & $\mathbf{21.14\pm0.05}$ & $\mathbf{13.96\pm0.05}$ & $\mathbf{12.02\pm0.02}$ \\
\bottomrule
\end{tabular}}
\par
\normalsize
\caption{Sweep-selected flow-matching performance at fixed fractions of the 400k-step training budget. The selected learning rate is fixed across budget fractions. Lower is better.}
\label{tab:main-results-full}
\end{table}

Table~\ref{tab:main-results-full} evaluates the sweep-selected runs at three fractions of the budget $B=400$k. For each optimizer, the learning rate selected by final FID is held fixed when evaluating the earlier budget fractions. Muon-C has the lowest FID at every reported fraction on both datasets. Its margin is largest at $0.1B$, where it reaches 10.37 on CIFAR-10 compared with 24.97 for global unfolded Muon and 51.37 for Adam, and 21.14 on ImageNet-1k-32 compared with 28.02 and 58.87, respectively.

These results complement the final tuned comparison in Table \ref{tab:main-results}. The selected runs retain the same optimizer ordering at $0.1B$ and $0.5B$.

%%%%%%%%%%%%%%%%%%%%%%%%%%%%%%%%%%%%%%%%%%%%%%%%%%%%%%%%%%%%%%%%%%%%%%%%%%%%%%%%%%%%%%%
\subsection{Local-Geometry and Finite-Support Diagnostics}

%%%%%%%%%%%%%%%%%%%%%%%
\subsubsection{Local-Geometry Protocol}
\label{app:local-geometry-protocol}
This diagnostic provides the detailed protocol for Figure \ref{fig:alignment-curvature} in Section \ref{sec:curvature}. We examine shared states from a Muon-C CIFAR-10 U-Net trajectory at 40k, 200k, and 400k iterations. At each state, we use one fixed 512-example evaluation objective and construct the global unfolded-Muon, Muon-S, and Muon-C directions from the same Nesterov momentum signal. For this diagnostic, all three directions use exact SVD-based polar factors rather than the Newton-Schulz approximation used during training. We jointly perturb the 49 eligible $3\times3$ stride-one convolution kernels shared by the comparison.

Each candidate direction is scaled per layer to the same convolution-operator-norm budget, estimated on a $32\times32$ Fourier grid. Let $D_{\ell,\mathrm{unf}}$ denote the raw unfolded-Muon direction for layer $\ell$, and let $\|\cdot\|_{\mathrm{conv},32}$ denote this grid estimate. We set
\[
\rho_\ell=4\times10^{-5}\,
\frac{\|D_{\ell,\mathrm{unf}}\|_{\mathrm{conv},32}}
{\operatorname{RMS}(D_{\ell,\mathrm{unf}})},
\qquad
V_{\ell,r}=\rho_\ell\,
\frac{D_{\ell,r}}{\|D_{\ell,r}\|_{\mathrm{conv},32}},
\]
where $\operatorname{RMS}(D)=\|D\|_F/\sqrt{|D|}$ and $r$ indexes the three methods. Thus, the unfolded-Muon perturbation has coefficient RMS $4\times10^{-5}$ at unit radius. Figure~\ref{fig:alignment-curvature} evaluates the joint perturbation $\alpha V_r$ for $\alpha\in\{0,0.25,0.5,1,1.5,2,3,4,6,8\}$.

For a candidate direction $V$, Section \ref{sec:curvature} uses the local quadratic approximation \eqref{eqn:quad-approx}. We use central finite differences at $\theta\pm hV$ with $h=0.25$ in the radius-multiplier units above. The Hessian estimate is
\[
\widehat c_H=\frac{L(\theta+hV)-2L(\theta)+L(\theta-hV)}{h^2}.
\]
The MSE Gauss--Newton estimate is twice the mean squared central difference of model outputs, with each output difference divided by $2h$. Both estimates use the same fixed inputs and stochastic draws at the perturbed states.

%%%%%%%%%%%%%%%%%%%%%%%
\subsubsection{Finite-Support Diagnostic}
Table \ref{tab:support-diagnostic} in Section \ref{sec:curvature} examines whether independent frequency-wise polarization on the full feature-frequency grid can be restricted back to the stored finite kernel support. Using stored Muon-C momenta from a completed CIFAR-10 run, let $U_k$ denote the update obtained by polarizing on the $3\times3$ critical grid and let $U_\Omega$ denote the update obtained from the same momentum on the layer's full feature grid. Let $P$ crop a full-grid update to the stored support and let $E$ zero-embed a cropped kernel back into the feature grid. We measure
\[
d_{\mathrm{supp}}
=
\frac{\|U_k-PU_\Omega\|_F}{\|PU_\Omega\|_F},
\qquad
\ell
=
\frac{\|U_\Omega-EPU_\Omega\|_F^2}{\|U_\Omega\|_F^2},
\qquad
d_{\mathrm{full}}
=
\frac{\|EU_k-U_\Omega\|_F}{\|U_\Omega\|_F}.
\]
Here, $d_{\mathrm{supp}}$ measures the discrepancy between the critical-grid update and the coefficients retained after cropping the full-grid update, $\ell$ is the fraction of full-grid energy outside the stored support, and $d_{\mathrm{full}}$ measures their discrepancy on the complete feature grid.

The diagnostic uses nine representative eligible stride-one layers spanning $4\times4$ to $32\times32$ feature grids at 40k, 200k, and 400k iterations. As reported in Table \ref{tab:support-diagnostic}, the obstruction is substantial and stable throughout training. As numerical checks, the finite-support identity
\[
d_{\mathrm{full}}^2=(1-\ell)d_{\mathrm{supp}}^2+\ell
\]
has maximum residual $1.79\times10^{-7}$. Exact-SVD checks of the idealized full-grid LMO on one layer at each checkpoint attain maximum relative objective gap $9.27\times10^{-8}$ and feasibility error $7.15\times10^{-7}$. Thus, the full-grid oracle is solved accurately; the observed discrepancy reflects the incompatibility between independent full-grid frequency polarization and the original finite kernel support.

%%%%%%%%%%%%%%%%%%%%%%%%%%%%%%%%%%%%%%%%%%%%%%%%%%%%%%%%%%%%%%%%%%%%%%%%%%%%%%%%%%%%%%%
\bibliography{ref-muon}

\end{document}